\tikzstyle{process-start} = [rectangle, 
\tikzstyle{process} = [rectangle, 
\tikzstyle{process-long} = [rectangle, 
\tikzstyle{algtitle} = [rectangle, 
\tikzstyle{alg} = [rectangle, 
\tikzstyle{alg-long} = [rectangle, 
\tikzstyle{process-arrow} = [very thick,->,>=stealth]
\tikzstyle{alg-arrow} = [thick,dashed,->,>=stealth]
\newtheorem{theorem}{Theorem}
\newtheorem{lemma}{Lemma}
\newtheorem{definition}{Definition}
\newtheorem{assumption}{Assumption}
\title{Uncertainty-Aware Policy Optimization: \\A Robust, Adaptive Trust Region Approach}
\author {
        James Queeney,
        Ioannis Ch. Paschalidis,
        Christos G. Cassandras \\
}
\begin{document}

\maketitle

\begin{abstract}
In order for reinforcement learning techniques to be useful in real-world decision making processes, they must be able to produce robust performance from limited data. Deep policy optimization methods have achieved impressive results on complex tasks, but their real-world adoption remains limited because they often require significant amounts of data to succeed. When combined with small sample sizes, these methods can result in unstable learning due to their reliance on high-dimensional sample-based estimates. In this work, we develop techniques to control the uncertainty introduced by these estimates. We leverage these techniques to propose a deep policy optimization approach designed to produce stable performance even when data is scarce. The resulting algorithm, Uncertainty-Aware Trust Region Policy Optimization, generates robust policy updates that adapt to the level of uncertainty present throughout the learning process.
\end{abstract}

\section{Introduction}\label{sec:intro}

By combining policy optimization techniques with rich function approximators such as deep neural networks, the field of reinforcement learning has achieved significant success on a variety of high-dimensional continuous control tasks \citep{duan_2016}. Despite these promising results, there are several barriers preventing the widespread adoption of deep reinforcement learning techniques for real-world decision making. Most notably, policy optimization algorithms can exhibit instability during training and high sample complexity, which are further exacerbated by the use of neural network function approximators. These are undesirable qualities in important applications such as robotics and healthcare, where data collection may be expensive and poor performance at any point can be both costly and dangerous.

Trust Region Policy Optimization (TRPO) \citep{schulman_2015} is one of the most popular methods that has been developed to address these issues, utilizing a trust region in policy space to generate stable but efficient updates. In order to perform well in practice, however, TRPO often requires a large number of samples to be collected prior to each policy update. This is because TRPO, like all policy optimization algorithms, relies on sample-based estimates to approximate expectations. These estimates are known to suffer from high variance, particularly for problems with long time horizons. As a result, the use of sample-based estimates can be a major source of error unless a sufficiently large number of samples are collected.

Motivated by the need to make decisions from limited data in real-world settings, we focus on addressing the instability caused by finite-sample estimation error in policy optimization. By directly accounting for the uncertainty present in sample-based estimates when generating policy updates, we can make efficient, robust use of limited data to produce stable performance throughout the training process. In this work, we develop an algorithm we call \emph{Uncertainty-Aware Trust Region Policy Optimization (UA-TRPO)} that controls the finite-sample estimation error in the two main components of TRPO: (i) the policy gradient and (ii) the trust region metric. Our main contributions are as follows:
\begin{enumerate}
\item We construct a finite-sample policy improvement lower bound that is accurate up to first and second order approximations, which we use to motivate an adaptive trust region that directly considers the uncertainty in the policy gradient estimate.
\item We propose a computationally efficient technique to restrict policy updates to a subspace where trust region information is available from the observed trajectories.
\item We demonstrate the robust performance of our approach through experiments on high-dimensional continuous control tasks in OpenAI Gym's MuJoCo environments \citep{brockman_2016,todorov_2012}.
\end{enumerate}
We summarize our uncertainty-aware modifications to TRPO in Figure~\ref{fig:sum}.

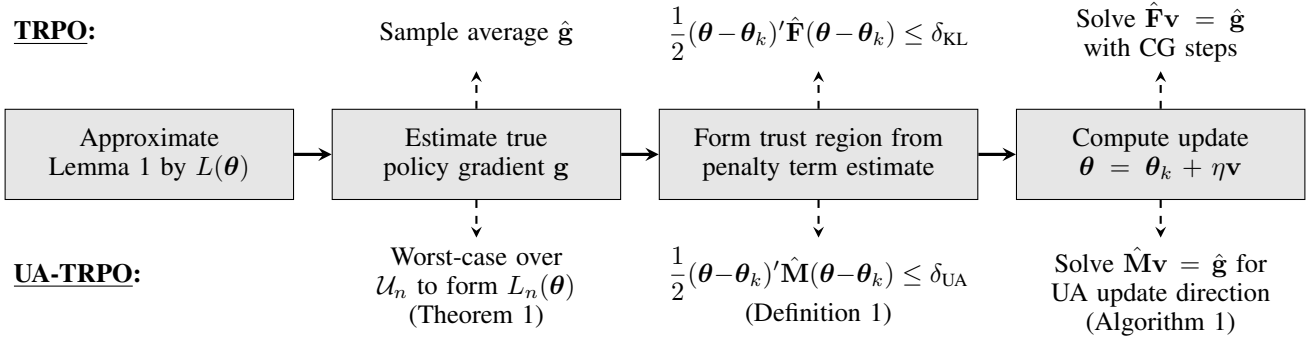
\begin{figure*}
\centering
\begin{tikzpicture}[node distance=0.5cm]
\node[process-start] (approxLB) {Approximate Lemma~\ref{lemma:pdLB} by $L(\boldsymbol\theta)$};

\node[algtitle, above=of approxLB,align=left] (title-TRPO) {\textbf{\underline{TRPO}:}};
\node[algtitle, below=of approxLB,align=left] (title-UA) {\textbf{\underline{UA-TRPO}:}};

\node[process, right=of approxLB] (PG) {Estimate true \\ policy gradient $\mathbf{g}$};

\node[alg, above=of PG] (PG-TRPO) {Sample average $\hat{\mathbf{g}}$};
\node[alg, below=of PG] (PG-UA) {Worst-case over $\mathcal{U}_n$ to form $L_n(\boldsymbol\theta)$ \\ (Theorem~\ref{thm:robLB})};

\node[process-long, right=of PG] (trust) {Form trust region from penalty term estimate};

\node[alg-long, above=of trust] (trust-TRPO) {$\displaystyle
\frac{1}{2}(\boldsymbol\theta-\boldsymbol\theta_k)' \hat{\mathbf{F}} (\boldsymbol\theta-\boldsymbol\theta_k) \leq \delta_{\text{KL}}
$
};

\node[alg-long, below=of trust] (trust-UA) {$\displaystyle
\frac{1}{2}(\boldsymbol\theta-\boldsymbol\theta_k)' \hat{\mathbf{M}} (\boldsymbol\theta-\boldsymbol\theta_k) \leq \delta_{\text{UA}}
$ \\ (Definition~\ref{def:UAtrust})
};

\node[process, right=of trust] (sol) {Compute update $\boldsymbol\theta = \boldsymbol\theta_k + \eta \mathbf{v}$};

\node[alg, above=of sol] (sol-TRPO) {Solve $\hat{\mathbf{F}}\mathbf{v} = \hat{\mathbf{g}}$ with CG steps};
\node[alg, below=of sol] (sol-UA) {Solve $\hat{\mathbf{M}}\mathbf{v} = \hat{\mathbf{g}}$ for UA update direction \\ (Algorithm~\ref{alg:update})};

\draw[process-arrow] (approxLB) -- (PG);
\draw[process-arrow] (PG) -- (trust);
\draw[process-arrow] (trust) -- (sol);

\draw[alg-arrow] (PG) -- (PG-TRPO);
\draw[alg-arrow] (PG) -- (PG-UA);
\draw[alg-arrow] (trust) -- (trust-TRPO);
\draw[alg-arrow] (trust) -- (trust-UA);
\draw[alg-arrow] (sol) -- (sol-TRPO);
\draw[alg-arrow] (sol) -- (sol-UA);

\end{tikzpicture} 
\caption{Comparison of TRPO and UA-TRPO. Both algorithms are derived from the lower bound in Lemma~\ref{lemma:pdLB} and its approximation $L(\boldsymbol\theta)$. Abbreviations: CG denotes conjugate gradient; UA denotes uncertainty-aware.}\label{fig:sum}
\end{figure*}

\section{Preliminaries}\label{sec:background}

\subsubsection{Notational Conventions.}

We use bold lowercase letters to denote vectors, bold uppercase letters to denote matrices, script letters to denote sets, and prime to denote transpose. $\mathbb{E}\left[ \, \cdot \, \right]$ represents expectation, and we use hats ($\, \hat{\cdot} \,$) to denote sample-based estimates.

\subsubsection{Reinforcement Learning Framework.}

We model the sequential decision making problem as an infinite-horizon, discounted Markov Decision Process (MDP) defined by the tuple $(\mathcal{S}, \mathcal{A}, p, r, \rho_0, \gamma)$. $\mathcal{S}$ is the set of states and $\mathcal{A}$ is the set of actions, both possibly infinite. $p: \mathcal{S} \times \mathcal{A} \rightarrow \Delta_\mathcal{S}$ is the transition probability function of the MDP where $\Delta_\mathcal{S}$ denotes the space of probability distributions over $\mathcal{S}$, $r: \mathcal{S} \times \mathcal{A} \rightarrow \mathbb{R}$ is the reward function, $\rho_0 \in \Delta_\mathcal{S}$ is the initial state distribution, and $\gamma \in [0,1)$ is the discount factor. 

We model the agent's decisions as a stationary policy $\pi: \mathcal{S} \rightarrow \Delta_\mathcal{A}$, where $\pi(a \mid s)$ is the probability of taking action $a$ in state $s$. In policy optimization, we search over a restricted class of policies $\pi_{\boldsymbol\theta} \in \Pi_{\boldsymbol\theta}$ parameterized by $\boldsymbol\theta \in \mathbb{R}^d$, where a neural network is typically used to represent $\pi_{\boldsymbol\theta}$. The standard goal in reinforcement learning is to find a policy parameter $\boldsymbol\theta$ that maximizes the expected total discounted reward $J(\boldsymbol\theta) = \mathop{\mathbb{E}}_{\tau \sim \pi_{\boldsymbol\theta}} \left[ \sum_{t=0}^\infty \gamma^t r(s_t,a_t) \right]$, where $\tau \sim \pi_{\boldsymbol\theta}$ represents a trajectory $\tau = (s_0,a_0,s_1,a_1,\ldots)$ sampled according to $s_0 \sim \rho_0(\, \cdot \,)$, $a_t \sim \pi_{\boldsymbol\theta}(\, \cdot \mid s_t)$, and $s_{t+1} \sim p(\, \cdot \mid s_t, a_t)$. 

We adopt standard reinforcement learning definitions throughout the paper. We denote the advantage function of $\pi_{\boldsymbol\theta}$ as $A_{\boldsymbol\theta}(s,a) = Q_{\boldsymbol\theta}(s,a) - V_{\boldsymbol\theta}(s)$, where $Q_{\boldsymbol\theta}(s,a) =  \mathbb{E}_{\tau \sim \pi_{\boldsymbol\theta}} \big[ \sum_{t=0}^\infty \gamma^t r(s_t,a_t) \mid s_0 = s, a_0 = a \big]$ is the state-action value function and $V_{\boldsymbol\theta}(s) = \mathbb{E}_{a \sim \pi_{\boldsymbol\theta}(\cdot \mid s)} \left[ Q_{\boldsymbol\theta}(s,a) \right]$ is the state value function. We denote the normalized discounted state visitation distribution induced by $\pi_{\boldsymbol\theta}$ as $\rho_{\boldsymbol\theta} \in \Delta_\mathcal{S}$ where $\rho_{\boldsymbol\theta}(s) = (1-\gamma) \sum_{t=0}^\infty \gamma^t P(s_t = s \mid \rho_0, \pi_{\boldsymbol\theta}, p)$, and the corresponding normalized discounted state-action visitation distribution as $d_{\boldsymbol\theta} \in \Delta_{\mathcal{S} \times \mathcal{A}}$ where $d_{\boldsymbol\theta}(s,a) = \rho_{\boldsymbol\theta}(s) \pi_{\boldsymbol\theta}(a \mid s)$.

\subsubsection{Trust Region Policy Optimization.}

TRPO is motivated by the goal of monotonic improvement used by \citet{kakade_2002} in Conservative Policy Iteration (CPI). CPI achieves monotonic improvement by maximizing a lower bound on policy improvement that can be constructed using only samples from the current policy. The lower bound developed by \citet{kakade_2002} applies only to mixture policies, but was later extended to arbitrary policies by \citet{schulman_2015} and further refined by \citet{achiam_2017}:

\begin{lemma}[\citet{achiam_2017}, Corollary 3]\label{lemma:pdLB}
Consider two policies  $\pi_{\boldsymbol\theta_k}$ and $\pi_{\boldsymbol\theta}$. Let $\epsilon_{\boldsymbol\theta} = \max_s \left| \mathbb{E}_{a \sim \pi_{\boldsymbol\theta}(\cdot \mid s)} \left[ A_{\boldsymbol\theta_k}(s,a) \right] \right|$, and denote the Kullback-Leibler (KL) divergence between $\pi_{\boldsymbol\theta_k}(\, \cdot \mid s)$ and $\pi_{\boldsymbol\theta}(\, \cdot \mid s)$ by $\text{\emph{KL}}(\boldsymbol\theta_k \Vert \boldsymbol\theta)(s)$. The difference between expected total discounted rewards $J(\boldsymbol\theta_k)$ and $J(\boldsymbol\theta)$ can be bounded below by
\begin{multline}\label{eq:LB}
J(\boldsymbol\theta) - J(\boldsymbol\theta_k) \geq \frac{1}{1-\gamma} \mathop{\mathbb{E}}_{(s,a) \sim d_{\boldsymbol\theta_k}} \left[ \frac{\pi_{\boldsymbol\theta} (a \mid s)}{\pi_{\boldsymbol\theta_k}(a \mid s)} A_{\boldsymbol\theta_k}(s,a) \right] \\ - \frac{\sqrt{2}\gamma \epsilon_{\boldsymbol\theta}}{(1-\gamma)^2} \sqrt{ \mathop{\mathbb{E}}_{s \sim \rho_{\boldsymbol\theta_k}} \left[ \text{\emph{KL}}(\boldsymbol\theta_k \Vert \boldsymbol\theta)(s) \right] },
\end{multline}	 
where the first term on the right-hand side is the surrogate objective and the second is the KL penalty term.
\end{lemma}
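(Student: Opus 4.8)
The plan is to reduce the bound to a comparison of the two discounted state visitation distributions $\rho_{\boldsymbol\theta}$ and $\rho_{\boldsymbol\theta_k}$, and then to control that mismatch by the policy divergence. First I would invoke the performance difference lemma of \citet{kakade_2002}, which gives the exact identity $J(\boldsymbol\theta) - J(\boldsymbol\theta_k) = \frac{1}{1-\gamma}\mathbb{E}_{(s,a)\sim d_{\boldsymbol\theta}}[A_{\boldsymbol\theta_k}(s,a)]$; the key feature is that the advantage of the \emph{old} policy is averaged under the \emph{new} policy's visitation $d_{\boldsymbol\theta}$. Writing $\bar{A}_{\boldsymbol\theta}(s) = \mathbb{E}_{a\sim\pi_{\boldsymbol\theta}(\cdot\mid s)}[A_{\boldsymbol\theta_k}(s,a)]$, so that $\epsilon_{\boldsymbol\theta} = \max_s |\bar{A}_{\boldsymbol\theta}(s)|$, both the exact difference and the surrogate objective take the form $\frac{1}{1-\gamma}\mathbb{E}_{s\sim\rho}[\bar{A}_{\boldsymbol\theta}(s)]$ under $\rho = \rho_{\boldsymbol\theta}$ and $\rho = \rho_{\boldsymbol\theta_k}$ respectively; a change of measure confirms that the importance-weighted surrogate in \eqref{eq:LB} equals $\frac{1}{1-\gamma}\mathbb{E}_{s\sim\rho_{\boldsymbol\theta_k}}[\bar{A}_{\boldsymbol\theta}(s)]$. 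The gap between the true improvement and the surrogate is therefore $\frac{1}{1-\gamma}\big(\mathbb{E}_{s\sim\rho_{\boldsymbol\theta}} - \mathbb{E}_{s\sim\rho_{\boldsymbol\theta_k}}\big)[\bar{A}_{\boldsymbol\theta}(s)]$, which H\"older's inequality with $\|\bar{A}_{\boldsymbol\theta}\|_\infty = \epsilon_{\boldsymbol\theta}$ bounds below by $-\frac{\epsilon_{\boldsymbol\theta}}{1-\gamma}\|\rho_{\boldsymbol\theta}-\rho_{\boldsymbol\theta_k}\|_1$.

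The main obstacle is the second step: bounding $\|\rho_{\boldsymbol\theta}-\rho_{\boldsymbol\theta_k}\|_1$ by the average policy divergence. I would treat the state visitation distribution as the solution of a linear fixed-point equation: letting $P_{\boldsymbol\theta}$ denote the state-to-state transition operator induced by $\pi_{\boldsymbol\theta}$, one has $\rho_{\boldsymbol\theta} = (1-\gamma)(I-\gamma P_{\boldsymbol\theta})^{-1}\rho_0$. Applying the resolvent identity $A^{-1}-B^{-1} = A^{-1}(B-A)B^{-1}$ with $A = I-\gamma P_{\boldsymbol\theta}$ and $B = I-\gamma P_{\boldsymbol\theta_k}$ yields $\rho_{\boldsymbol\theta}-\rho_{\boldsymbol\theta_k} = \gamma(I-\gamma P_{\boldsymbol\theta})^{-1}(P_{\boldsymbol\theta}-P_{\boldsymbol\theta_k})\rho_{\boldsymbol\theta_k}$. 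Since $(I-\gamma P_{\boldsymbol\theta})^{-1} = \sum_{t\geq 0}\gamma^t P_{\boldsymbol\theta}^t$ is a sum of stochastic operators, its induced $\ell_1$ norm is at most $1/(1-\gamma)$; and expanding $(P_{\boldsymbol\theta}-P_{\boldsymbol\theta_k})\rho_{\boldsymbol\theta_k}$ and summing over next states (using $\sum_{s'}p(s'\mid s,a)=1$) gives $\|(P_{\boldsymbol\theta}-P_{\boldsymbol\theta_k})\rho_{\boldsymbol\theta_k}\|_1 \leq \mathbb{E}_{s\sim\rho_{\boldsymbol\theta_k}}\|\pi_{\boldsymbol\theta}(\cdot\mid s)-\pi_{\boldsymbol\theta_k}(\cdot\mid s)\|_1 = 2\,\mathbb{E}_{s\sim\rho_{\boldsymbol\theta_k}}[D_{\mathrm{TV}}(\boldsymbol\theta_k\Vert\boldsymbol\theta)(s)]$, where $D_{\mathrm{TV}}$ denotes the total variation distance between the conditional policies. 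Combining the two factors yields $\|\rho_{\boldsymbol\theta}-\rho_{\boldsymbol\theta_k}\|_1 \leq \frac{2\gamma}{1-\gamma}\mathbb{E}_{s\sim\rho_{\boldsymbol\theta_k}}[D_{\mathrm{TV}}(\boldsymbol\theta_k\Vert\boldsymbol\theta)(s)]$.

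Finally I would convert the total-variation penalty into the KL form of \eqref{eq:LB}. Pinsker's inequality gives $D_{\mathrm{TV}}(\boldsymbol\theta_k\Vert\boldsymbol\theta)(s) \leq \sqrt{\tfrac{1}{2}\,\mathrm{KL}(\boldsymbol\theta_k\Vert\boldsymbol\theta)(s)}$, where the symmetry of total variation makes either ordering of the KL admissible, and Jensen's inequality pulls the expectation inside the square root, $\mathbb{E}_{s\sim\rho_{\boldsymbol\theta_k}}\sqrt{\tfrac{1}{2}\,\mathrm{KL}(\boldsymbol\theta_k\Vert\boldsymbol\theta)(s)} \leq \sqrt{\tfrac{1}{2}\,\mathbb{E}_{s\sim\rho_{\boldsymbol\theta_k}}[\mathrm{KL}(\boldsymbol\theta_k\Vert\boldsymbol\theta)(s)]}$. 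Chaining the three estimates and collecting constants, the prefactor $\frac{1}{1-\gamma}\cdot\epsilon_{\boldsymbol\theta}\cdot\frac{2\gamma}{1-\gamma}\cdot\frac{1}{\sqrt{2}}$ simplifies to exactly $\frac{\sqrt{2}\,\gamma\,\epsilon_{\boldsymbol\theta}}{(1-\gamma)^2}$, recovering the stated penalty coefficient. I expect the constant bookkeeping and the final two inequalities to be routine; the delicate point is the resolvent step, where the stochasticity of each $P_{\boldsymbol\theta}^t$ together with the per-state $\ell_1$ expansion is precisely what amplifies the policy divergence into the visitation divergence by the clean factor $\gamma/(1-\gamma)$.
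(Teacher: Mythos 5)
Your proposal is correct, and it is essentially the proof of the result the paper imports here: this lemma is stated without proof in the paper, cited directly from \citet{achiam_2017} (Corollary 3), and your chain of steps --- the performance difference lemma of \citet{kakade_2002}, H\"older's inequality against $\left\Vert \rho_{\boldsymbol\theta}-\rho_{\boldsymbol\theta_k} \right\Vert_1$, the resolvent/Neumann-series bound $\left\Vert \rho_{\boldsymbol\theta}-\rho_{\boldsymbol\theta_k} \right\Vert_1 \leq \frac{2\gamma}{1-\gamma}\mathbb{E}_{s \sim \rho_{\boldsymbol\theta_k}}\left[ D_{\mathrm{TV}}(\boldsymbol\theta_k \Vert \boldsymbol\theta)(s) \right]$, then Pinsker and Jensen --- is the same argument used in that cited source, with the constants combining correctly to $\sqrt{2}\gamma\epsilon_{\boldsymbol\theta}/(1-\gamma)^2$. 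Nothing further is needed.
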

This lower bound can be optimized iteratively to generate a sequence of policies with parameters $\left\lbrace \boldsymbol\theta_k \right\rbrace$ and monotonically improving performance. However, this optimization problem can be difficult to solve and leads to very small policy updates in practice. Instead, TRPO introduces several modifications to this approach to produce a scalable and practical algorithm based on Lemma~\ref{lemma:pdLB}. First, TRPO considers a first order approximation of the surrogate objective and a second order approximation of the KL divergence in \eqref{eq:LB}, yielding the approximate lower bound
\begin{multline}\label{eq:approxLB}
L(\boldsymbol\theta) = \mathbf{g}'(\boldsymbol\theta-\boldsymbol\theta_k) \\-  \frac{\gamma \epsilon_{\boldsymbol\theta}}{(1-\gamma)^2} \sqrt{ (\boldsymbol\theta-\boldsymbol\theta_k)' \mathbf{F} (\boldsymbol\theta-\boldsymbol\theta_k) }, \quad
\end{multline}
with
\begin{equation}\label{eq:PG}
\begin{split}
\mathbf{g} &{} = \nabla_{\boldsymbol\theta}J(\boldsymbol\theta_k) \\ 
&{} =  \frac{1}{1-\gamma}\mathop{\mathbb{E}}_{(s,a) \sim d_{\boldsymbol\theta_k}} \left[  A_{\boldsymbol\theta_k}(s,a) \nabla_{\boldsymbol\theta} \log \pi_{\boldsymbol\theta_k}(a \mid s) \right] 
\end{split}
\end{equation}
and
\begin{equation}\label{eq:FIM}
\mathbf{F} = \mathop{\mathbb{E}}_{(s,a) \sim d_{\boldsymbol\theta_k}} \left[ \nabla_{\boldsymbol\theta} \log \pi_{\boldsymbol\theta_k}(a \mid s) \nabla_{\boldsymbol\theta} \log \pi_{\boldsymbol\theta_k}(a \mid s)' \right], 
\end{equation}
where $\mathbf{g}$ is the standard policy gradient determined by the Policy Gradient Theorem \citep{williams_1992, sutton_2000} and $\mathbf{F}$ is the average Fisher Information Matrix \citep{schulman_2015}. Note that $\mathbf{g}$ and $\mathbf{F}$ are themselves expectations, so in practice $L(\boldsymbol\theta)$ is estimated using sample averages $\hat{\mathbf{g}}$ and $\hat{\mathbf{F}}$. TRPO then reformulates the KL penalty term as a trust region constraint to produce policy updates of meaningful size, resulting in the following optimization problem at each iteration:
\begin{equation}\label{eq:TRPO}
\begin{array}{rlcl}
\displaystyle \max_{\boldsymbol\theta} & \displaystyle \hat{\mathbf{g}}'(\boldsymbol\theta-\boldsymbol\theta_k) \\
 \text{s.t.} & \displaystyle \frac{1}{2}(\boldsymbol\theta-\boldsymbol\theta_k)' \hat{\mathbf{F}} (\boldsymbol\theta-\boldsymbol\theta_k) & \leq & \displaystyle \delta_{\text{KL}},
\end{array}
\end{equation}
where the trust region parameter $\delta_{\text{KL}}$ is chosen based on the desired level of conservatism. The closed-form solution is $\boldsymbol\theta = \boldsymbol\theta_k + \eta \mathbf{v}$, where $\mathbf{v} = \hat{\mathbf{F}}^{-1}\hat{\mathbf{g}}$ is the update direction and $\eta = \sqrt{2\delta_{\text{KL}} / \mathbf{v}'\hat{\mathbf{F}}\mathbf{v}}$. The update direction cannot be calculated directly in high dimensions, so it is solved approximately by applying a finite number of conjugate gradient steps to $\hat{\mathbf{F}}\mathbf{v} = \hat{\mathbf{g}}$. Finally, a backtracking line search is performed to account for the error introduced by the first and second order approximations.  

\section{Uncertainty-Aware Trust Region}\label{sec:UAtrust}

By replacing expectations with sample-based estimates in the approximate lower bound $L(\boldsymbol\theta)$, TRPO introduces a potentially significant source of error when the number of samples $n$ used to construct the estimates is small. This finite-sample estimation error can destroy the approximate monotonic improvement argument on which TRPO is based. As a result, TRPO typically uses large amounts of data to generate stable performance.

We first address the error present in the policy gradient estimate. Rather than relying on the high-variance estimate $\hat{\mathbf{g}}$ to approximate $\mathbf{g}$ in $L(\boldsymbol\theta)$, we instead develop a robust lower bound $L_{n}(\boldsymbol\theta)$ that holds for all vectors in an uncertainty set $\mathcal{U}_n$ centered around $\hat{\mathbf{g}}$. If $\mathcal{U}_n$ contains the true policy gradient $\mathbf{g}$, $L_{n}(\boldsymbol\theta)$ will be a lower bound to $J(\boldsymbol\theta) - J(\boldsymbol\theta_k)$ up to first and second order approximation error. 

Consider the policy gradient random vector 
\begin{equation}
\boldsymbol\xi = \frac{1}{1-\gamma}A_{\boldsymbol\theta_k}(s,a) \nabla_{\boldsymbol\theta} \log \pi_{\boldsymbol\theta_k}(a \mid s) \in \mathbb{R}^d,
\end{equation}
where $(s,a) \sim d_{\boldsymbol\theta_k}$. Note that $\mathbf{g}=\mathbb{E}\left[ \boldsymbol\xi \right]$ is the true policy gradient as in \eqref{eq:PG}, and $\mathbf{\Sigma}=\mathbb{E} \left[ \left(\boldsymbol\xi-\mathbf{g}\right)\left(\boldsymbol\xi-\mathbf{g}\right)' \right]$ is the true covariance matrix of the policy gradient random vector. We make the following assumption regarding the standardized random vector $\mathbf{\Sigma}^{-1/2}(\boldsymbol\xi - \mathbf{g})$:
\begin{assumption}
$\mathbf{\Sigma}^{-1/2}(\boldsymbol\xi - \mathbf{g})$ is a sub-Gaussian random vector with variance proxy $\sigma^2$.
\end{assumption}
The sub-Gaussian assumption is a reasonable one, and is satisfied by standard assumptions in the literature such as bounded rewards and bounded $\nabla_{\boldsymbol\theta} \log \pi_{\boldsymbol\theta_k}(a \mid s)$ \citep{konda_2000,papini_2018}. Using this assumption, we construct $\mathcal{U}_n$ as follows (see the Appendix for more details):

\begin{lemma}[Constructing $\mathcal{U}_n$]\label{lemma:U}
Consider $\hat{\boldsymbol\xi}_1,\ldots,\hat{\boldsymbol\xi}_n$ independent, identically distributed random samples of $\boldsymbol\xi$, with $\hat{\mathbf{g}} = \frac{1}{n} \sum_{i=1}^n \hat{\boldsymbol\xi}_i$ their sample average. Fix $\alpha \in (0,1)$, and define 
\begin{equation}
\mathcal{U}_n = \left\lbrace \mathbf{u} \mid (\mathbf{u} - \hat{\mathbf{g}})'\mathbf{\Sigma}^{-1}(\mathbf{u} - \hat{\mathbf{g}}) \leq \sigma^2 R_n^2 \right\rbrace,
\end{equation}
where
\begin{equation}
R_n^2 = \frac{1}{n} \left( d + 2\sqrt{d \log \left( \frac{1}{\alpha} \right)} + 2 \log \left( \frac{1}{\alpha} \right) \right).
\end{equation}
Then, $\mathbf{g} \in \mathcal{U}_n$ with probability at least $1-\alpha$.
\end{lemma}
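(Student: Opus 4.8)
The plan is to reduce the event $\mathbf{g} \in \mathcal{U}_n$ to a concentration statement about the squared norm of a standardized, isotropic average of sub-Gaussian vectors, and then match the resulting tail to the chi-squared upper-tail bound of Laurent and Massart, whose form $d + 2\sqrt{dx} + 2x$ is exactly what appears in $R_n^2$.

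First I would standardize. Define $\mathbf{z}_i = \mathbf{\Sigma}^{-1/2}(\hat{\boldsymbol\xi}_i - \mathbf{g})$, which are i.i.d.\ copies of $\mathbf{z} = \mathbf{\Sigma}^{-1/2}(\boldsymbol\xi - \mathbf{g})$. By construction $\mathbb{E}[\mathbf{z}] = \mathbf{0}$ and $\mathbb{E}[\mathbf{z}\mathbf{z}'] = \mathbf{I}_d$, and by the sub-Gaussian assumption each $\mathbf{z}_i$ satisfies $\mathbb{E}[\exp(\mathbf{u}'\mathbf{z}_i)] \leq \exp\!\big(\tfrac{\sigma^2}{2}\|\mathbf{u}\|^2\big)$ for all $\mathbf{u} \in \mathbb{R}^d$. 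Writing $\bar{\mathbf{z}} = \tfrac{1}{n}\sum_{i=1}^n \mathbf{z}_i = \mathbf{\Sigma}^{-1/2}(\hat{\mathbf{g}} - \mathbf{g})$, the defining quadratic form of $\mathcal{U}_n$ becomes exactly $(\mathbf{g} - \hat{\mathbf{g}})'\mathbf{\Sigma}^{-1}(\mathbf{g} - \hat{\mathbf{g}}) = \|\bar{\mathbf{z}}\|^2$, so the claim is equivalent to $\mathbb{P}\big(\|\bar{\mathbf{z}}\|^2 > \sigma^2 R_n^2\big) \leq \alpha$. Since the sum of $n$ independent sub-Gaussian vectors of proxy $\sigma^2$ has proxy $n\sigma^2$, the average $\bar{\mathbf{z}}$ is sub-Gaussian with variance proxy $\sigma^2/n$, i.e.\ $\mathbb{E}[\exp(\mathbf{u}'\bar{\mathbf{z}})] \leq \exp\!\big(\tfrac{\sigma^2}{2n}\|\mathbf{u}\|^2\big)$.

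The crux is to convert this control of the \emph{linear} moment generating function of $\bar{\mathbf{z}}$ into a bound on the moment generating function of the \emph{quadratic} form $\|\bar{\mathbf{z}}\|^2$. I would do this with a Gaussian-averaging (decoupling) identity: for $0 < \lambda < n/(2\sigma^2)$ and an auxiliary standard Gaussian $\mathbf{G} \sim N(\mathbf{0},\mathbf{I}_d)$ independent of $\bar{\mathbf{z}}$, the one-dimensional Gaussian integral gives $\exp(\lambda \|\bar{\mathbf{z}}\|^2) = \mathbb{E}_{\mathbf{G}}\big[\exp(\sqrt{2\lambda}\,\mathbf{G}'\bar{\mathbf{z}}) \mid \bar{\mathbf{z}}\big]$. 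Taking expectations, swapping the order of integration by Fubini, and applying the sub-Gaussian bound with $\mathbf{u} = \sqrt{2\lambda}\,\mathbf{G}$ yields $\mathbb{E}[\exp(\lambda \|\bar{\mathbf{z}}\|^2)] \leq \mathbb{E}_{\mathbf{G}}\big[\exp(\tfrac{\lambda\sigma^2}{n}\|\mathbf{G}\|^2)\big] = \big(1 - \tfrac{2\lambda\sigma^2}{n}\big)^{-d/2}$, where the final equality is the $\chi^2_d$ moment generating function. Hence $\|\bar{\mathbf{z}}\|^2$ inherits exactly the moment generating function bound of $(\sigma^2/n)\chi^2_d$.

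Finally I would run the standard Chernoff argument on this bound, which is precisely the derivation of the Laurent-Massart upper tail: optimizing over $\lambda$ gives, for every $x > 0$, $\mathbb{P}\big(\|\bar{\mathbf{z}}\|^2 \geq \tfrac{\sigma^2}{n}(d + 2\sqrt{dx} + 2x)\big) \leq e^{-x}$. Setting $x = \log(1/\alpha)$ makes the right-hand side equal to $\alpha$ and the threshold equal to $\sigma^2 R_n^2$, establishing $\mathbb{P}(\mathbf{g} \notin \mathcal{U}_n) \leq \alpha$. The main obstacle is the quadratic-form step: the assumption only constrains linear functionals of $\boldsymbol\xi$, so the Gaussian-averaging identity is what lets the isotropic sub-Gaussian average be compared term-for-term against a chi-squared variable and thereby recover the exact constants in $R_n^2$; the remaining steps are standardization and a routine Chernoff optimization.
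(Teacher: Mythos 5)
Your proposal is correct, and its skeleton matches the paper's exactly: you standardize via $\mathbf{\Sigma}^{-1/2}(\boldsymbol\xi - \mathbf{g})$, observe that the quadratic form defining $\mathcal{U}_n$ evaluated at $\mathbf{g}$ is the squared norm of the standardized sample average, and note that this average is sub-Gaussian with variance proxy $\sigma^2/n$. The one genuine difference lies in how the final concentration step is handled. The paper treats it as a black box: its appendix states a squared-norm concentration lemma and proves it by citing Theorem~2.1 of \citet{hsu_2012}. You instead derive that tail bound from first principles --- the Gaussian decoupling identity $\exp(\lambda \Vert \bar{\mathbf{z}} \Vert^2) = \mathbb{E}_{\mathbf{G}}\left[ \exp(\sqrt{2\lambda}\,\mathbf{G}'\bar{\mathbf{z}}) \mid \bar{\mathbf{z}} \right]$, Fubini, the sub-Gaussian moment generating function bound, and a Chernoff optimization in the style of Laurent--Massart. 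This is in fact essentially the argument underlying the cited Hsu--Kakade--Zhang theorem, specialized to the isotropic case needed here, so you have effectively unpacked the paper's citation rather than found a new route. What each version buys: the paper's proof is shorter and rests on a more general result (the cited theorem covers arbitrary quadratic forms $\Vert \mathbf{A}\mathbf{x} \Vert^2$ and non-centered sub-Gaussian vectors), while yours is self-contained and makes transparent exactly where the constants $d + 2\sqrt{d\log(1/\alpha)} + 2\log(1/\alpha)$ in $R_n^2$ come from, namely the $\chi^2_d$ moment generating function combined with the Chernoff bound. All of your intermediate claims check out, including the variance-proxy accounting for the average and the validity range $\lambda < n/(2\sigma^2)$ for the chi-squared comparison.
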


The ellipsoidal uncertainty set $\mathcal{U}_n$ constructed in Lemma~\ref{lemma:U} has an intuitive structure. It can be seen as a multivariate extension of the standard confidence interval in univariate statistics, where the radius has been calculated to accommodate the more general sub-Gaussian case \citep{hsu_2012}. We use this uncertainty set to develop a finite-sample lower bound $L_{n}(\boldsymbol\theta)$ on the performance difference between two policies:

\begin{theorem}[Finite-Sample Policy Improvement Lower Bound]\label{thm:robLB}
Consider two policies  $\pi_{\boldsymbol\theta_k}$ and $\pi_{\boldsymbol\theta}$. Let $\epsilon_{\boldsymbol\theta}$ be as defined in Lemma~\ref{lemma:pdLB}. Assume Lemma~\ref{lemma:U} holds and is used to construct $\mathcal{U}_n$ with confidence $1-\alpha$. Then, we have that
\begin{multline}\label{eq:robLB}
L_{n}(\boldsymbol\theta) = \hat{\mathbf{g}}'(\boldsymbol\theta-\boldsymbol\theta_k)  \\
{} - \frac{\gamma \epsilon_{\boldsymbol\theta}}{(1-\gamma)^2} \sqrt{ (\boldsymbol\theta-\boldsymbol\theta_k)' \mathbf{F} (\boldsymbol\theta-\boldsymbol\theta_k) } \\
{} - \sigma R_n \sqrt{(\boldsymbol\theta-\boldsymbol\theta_k)' \mathbf{\Sigma} (\boldsymbol\theta-\boldsymbol\theta_k)} \qquad
\end{multline}
is a lower bound for $J(\boldsymbol\theta) - J(\boldsymbol\theta_k)$ with probability at least $1-\alpha$, up to first and second order approximation error.
\end{theorem}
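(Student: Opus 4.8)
The plan is to treat $L_n(\boldsymbol\theta)$ as a robust version of the deterministic approximate bound $L(\boldsymbol\theta)$ in \eqref{eq:approxLB}, in which the unknown true gradient $\mathbf{g}$ appearing in the surrogate term is replaced by a worst case over the confidence set $\mathcal{U}_n$. The starting observation is that, by the first and second order approximations underlying \eqref{eq:approxLB}, the quantity $L(\boldsymbol\theta)$ is a lower bound for $J(\boldsymbol\theta) - J(\boldsymbol\theta_k)$ up to the stated approximation error; this part is purely deterministic and is inherited directly from Lemma~\ref{lemma:pdLB}. The only object in $L(\boldsymbol\theta)$ that cannot be evaluated from data is $\mathbf{g}$, which enters solely through the linear term $\mathbf{g}'(\boldsymbol\theta - \boldsymbol\theta_k)$. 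First I would therefore isolate this term and bound it below by its worst-case value over $\mathcal{U}_n$, leaving the KL penalty term untouched.

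The key computation is the inner minimization of the linear objective over the ellipsoid,
\[
\min_{\mathbf{u} \in \mathcal{U}_n} \mathbf{u}'(\boldsymbol\theta - \boldsymbol\theta_k), \qquad \mathcal{U}_n = \left\lbrace \mathbf{u} : (\mathbf{u} - \hat{\mathbf{g}})' \mathbf{\Sigma}^{-1} (\mathbf{u} - \hat{\mathbf{g}}) \leq \sigma^2 R_n^2 \right\rbrace.
\]
Using positive definiteness of $\mathbf{\Sigma}$ and writing $\mathbf{u} = \hat{\mathbf{g}} + \mathbf{\Sigma}^{1/2} \mathbf{z}$ reduces the constraint to $\|\mathbf{z}\| \leq \sigma R_n$ and the objective to $\hat{\mathbf{g}}'(\boldsymbol\theta - \boldsymbol\theta_k) + \mathbf{z}' \mathbf{\Sigma}^{1/2} (\boldsymbol\theta - \boldsymbol\theta_k)$. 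A single application of Cauchy--Schwarz gives the minimizer $\mathbf{z}^\star = -\sigma R_n \, \mathbf{\Sigma}^{1/2}(\boldsymbol\theta - \boldsymbol\theta_k) / \| \mathbf{\Sigma}^{1/2}(\boldsymbol\theta - \boldsymbol\theta_k) \|$ and optimal value
\[
\hat{\mathbf{g}}'(\boldsymbol\theta - \boldsymbol\theta_k) - \sigma R_n \sqrt{(\boldsymbol\theta - \boldsymbol\theta_k)' \mathbf{\Sigma} (\boldsymbol\theta - \boldsymbol\theta_k)},
\]
which is exactly the surrogate plus the new third term appearing in \eqref{eq:robLB}. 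Substituting this value for $\mathbf{g}'(\boldsymbol\theta - \boldsymbol\theta_k)$ in $L(\boldsymbol\theta)$ produces precisely the claimed expression for $L_n(\boldsymbol\theta)$.

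It remains to track the probability. By Lemma~\ref{lemma:U}, the event $\mathcal{E} = \{ \mathbf{g} \in \mathcal{U}_n \}$ has probability at least $1-\alpha$. On $\mathcal{E}$ the true gradient is feasible for the inner minimization, so its optimal value is no larger than $\mathbf{g}'(\boldsymbol\theta - \boldsymbol\theta_k)$; hence $L_n(\boldsymbol\theta) \leq L(\boldsymbol\theta)$ on $\mathcal{E}$. Chaining this with the deterministic inequality $L(\boldsymbol\theta) \leq J(\boldsymbol\theta) - J(\boldsymbol\theta_k)$, valid up to first and second order approximation error, yields $L_n(\boldsymbol\theta) \leq J(\boldsymbol\theta) - J(\boldsymbol\theta_k)$ up to approximation error on an event of probability at least $1-\alpha$, which is the claim.

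The closed-form evaluation of the robust term is routine convex duality, so the main thing to handle with care is the logical bookkeeping of the two distinct error sources: the deterministic approximation error, which is present regardless of sampling and is inherited from $L(\boldsymbol\theta)$, and the statistical error, which is absorbed entirely into the high-probability containment $\mathbf{g} \in \mathcal{U}_n$. Keeping these separate is what lets the single confidence level $1-\alpha$ attach cleanly to the final bound, with no additional union bound or concentration step required beyond Lemma~\ref{lemma:U}.
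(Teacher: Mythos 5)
Your proposal is correct, and its logical skeleton is the same as the paper's: replace the unknown $\mathbf{g}$ in the linear term of $L(\boldsymbol\theta)$ by a worst case over $\mathcal{U}_n$, evaluate that worst case in closed form to obtain the third penalty term, then observe that on the event $\{\mathbf{g} \in \mathcal{U}_n\}$ (probability at least $1-\alpha$ by Lemma~\ref{lemma:U}) the true gradient is feasible, so $L_n(\boldsymbol\theta) \leq L(\boldsymbol\theta)$, and chain with Lemma~\ref{lemma:pdLB}. The one place you genuinely diverge is the evaluation of the inner minimization: the paper forms the Lagrangian of the ellipsoid-constrained problem, derives the dual function $D(\nu)$, maximizes it over $\nu \geq 0$, and invokes strong duality, whereas you whiten via $\mathbf{u} = \hat{\mathbf{g}} + \mathbf{\Sigma}^{1/2}\mathbf{z}$ and finish with a single Cauchy--Schwarz step. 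Your route is shorter and more elementary --- it avoids verifying concavity of the dual and the strong-duality appeal entirely --- at the cost of requiring $\mathbf{\Sigma}^{1/2}$ to be invertible for the change of variables; this is harmless here since the definition of $\mathcal{U}_n$ already presupposes $\mathbf{\Sigma}^{-1}$ exists, though the duality argument would extend more directly to degenerate or differently shaped uncertainty sets. Your closing remark about keeping the deterministic approximation error separate from the statistical containment event is exactly the bookkeeping the paper's proof performs implicitly, and stating it explicitly is a small improvement in clarity.
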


\begin{proof}
Consider a robust (i.e., worst-case with respect to $\mathcal{U}_n$) lower bound of the form 
\begin{equation}\label{eq:robLBgeneral}
\min_{\mathbf{u} \in \mathcal{U}_n} \mathbf{u}'(\boldsymbol\theta-\boldsymbol\theta_k) -  \frac{\gamma \epsilon_{\boldsymbol\theta}}{(1-\gamma)^2} \sqrt{ (\boldsymbol\theta-\boldsymbol\theta_k)' \mathbf{F} (\boldsymbol\theta-\boldsymbol\theta_k) },
\end{equation}
where $\mathcal{U}_n$ is defined as in Lemma~\ref{lemma:U}. Note that \eqref{eq:robLBgeneral} is a minimization of a linear function of $\mathbf{u}$ subject to a convex quadratic constraint in $\mathbf{u}$. By forming the Lagrangian and applying strong duality, we see that the minimum value of \eqref{eq:robLBgeneral} can be written in closed form as $L_{n}(\boldsymbol\theta)$. By construction of $\mathcal{U}_n$, $\mathbf{g}$ is a feasible solution to \eqref{eq:robLBgeneral} with probability at least $1-\alpha$. Therefore, $L(\boldsymbol\theta) \geq L_{n}(\boldsymbol\theta)$ with probability at least $1-\alpha$. By Lemma~\ref{lemma:pdLB}, $L(\boldsymbol\theta)$ is a lower bound for $J(\boldsymbol\theta) - J(\boldsymbol\theta_k)$ up to first and second order approximation error, which implies that with probability at least $1-\alpha$ so is $L_{n}(\boldsymbol\theta)$. For a more detailed proof, see the Appendix.
\end{proof}

The appearance of an additional penalty term in our robust finite-sample lower bound $L_{n}(\boldsymbol\theta)$ motivates the use of the following modified trust region:
\begin{definition}[Uncertainty-Aware Trust Region]\label{def:UAtrust}
For a given $\delta_{\text{\emph{UA}}}$, the uncertainty-aware trust region represents the set of all parameters $\boldsymbol\theta \in \mathbb{R}^d$ that satisfy the constraint
\begin{equation}\label{eq:M}
\frac{1}{2} (\boldsymbol\theta-\boldsymbol\theta_k)' \mathbf{M} (\boldsymbol\theta-\boldsymbol\theta_k) \leq \delta_{\text{\emph{UA}}},
\end{equation}
where $\mathbf{M} = \mathbf{F} + c R_n^2 \mathbf{\Sigma}, \, c \geq 0$.
\end{definition}
Note that each term in $\mathbf{M}$ accounts for a main source of potential error: the first term controls the approximation error from using on-policy expectations as in TRPO, while the second term controls the finite-sample estimation error from using the policy gradient estimate $\hat{\mathbf{g}}$. The importance of including this second term is illustrated in Figure~\ref{fig:trust}. The resulting trust region adapts to the true noise of the policy gradient random vector through $\mathbf{\Sigma}$, as well as the number of samples $n$ used to estimate the policy gradient through the coefficient $R_n^2$. We include the parameter $c \geq 0$ to control the trade-off between the two terms of $\mathbf{M}$, with $c=0$ corresponding to standard TRPO (i.e., no penalty for finite-sample estimation error).

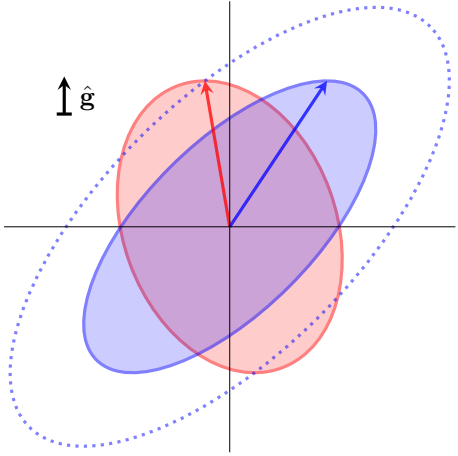
\begin{figure}
\centering
\begin{tikzpicture}
\filldraw[color=red, fill=red, draw opacity = 0.5, fill opacity = 0.2, very thick,rotate=20] (0,0) ellipse (1.41421356 and 2);
\filldraw[color=blue, fill=blue, draw opacity = 0.5, fill opacity = 0.2, very thick,rotate=45] (0,0) ellipse (2.50534244 and 1.1204232);

\filldraw[color=blue, fill=none, draw opacity = 0.5, fill opacity = 0.2, very thick,dotted,rotate=45] (0,0) ellipse (3.76807859 and 1.68513598);

\draw[color=red,very thick,opacity=0.75,->,>=stealth] (0,0) -- (-0.33122627,1.94062991);
\draw[color=blue,very thick,opacity=0.75,->,>=stealth] (0,0) -- (1.29375327,1.94062991);

\draw[color=black,very thick,->,>=stealth] (-2.2,1.5) -- (-2.2,2) node[midway,right,xshift=2,yshift=-1] {$\hat{\mathbf{g}}$};
\draw[color=black,very thick] (-2.3,1.5) -- (-2.1,1.5);

\draw[color=black,ultra thin] (-3,0) -- (3,0);
\draw[color=black,ultra thin] (0,-3) -- (0,3);
\end{tikzpicture}
\caption{Illustration of trust regions and corresponding policy updates in parameter space for TRPO (red) and UA-TRPO (blue). The dotted blue line represents the smallest uncertainty-aware trust region that contains the update proposed by TRPO. By accounting for the uncertainty present in the policy gradient estimate $\hat{\mathbf{g}}$, UA-TRPO achieves the same level of policy improvement as TRPO with lower total potential error (even though, in this case, UA-TRPO produces a larger policy update than TRPO in terms of KL divergence).}\label{fig:trust}
\end{figure}

This results in a modified policy update based on the optimization problem 
\begin{equation}\label{eq:UATRPO}
\begin{array}{rlcl}
\displaystyle \max_{\boldsymbol\theta} & \displaystyle \hat{\mathbf{g}}'(\boldsymbol\theta-\boldsymbol\theta_k) \\
 \text{s.t.} & \displaystyle \frac{1}{2}(\boldsymbol\theta-\boldsymbol\theta_k)' \hat{\mathbf{M}} (\boldsymbol\theta-\boldsymbol\theta_k) & \leq & \displaystyle \delta_{\text{UA}},
\end{array}
\end{equation}
where $\hat{\mathbf{F}}$ in \eqref{eq:TRPO} has been replaced by a sample-based estimate $\hat{\mathbf{M}} = \hat{\mathbf{F}} + c R_n^2 \hat{\mathbf{\Sigma}}$ of the uncertainty-aware trust region matrix.

\section{Uncertainty-Aware Update Direction}\label{sec:UAupdate}

The need to use the sample-based estimate $\hat{\mathbf{M}}$ for the trust region in \eqref{eq:UATRPO} introduces another potential source of error. In particular, because we are estimating a high-dimensional matrix using a limited number of samples, $\hat{\mathbf{M}}$ is unlikely to be full rank. This creates multiple problems when approximating the update direction $\mathbf{v} = \mathbf{M}^{-1}\mathbf{g}$ by solving the system of equations $\hat{\mathbf{M}}\mathbf{v}=\hat{\mathbf{g}}$. First, it is unlikely that this system of equations has an exact solution, so we must consider a least-squares solution instead. Second, the least-squares solution is not unique because the null space of $\hat{\mathbf{M}}$ contains non-zero directions. This second point is particularly important for managing uncertainty, as the null space of $\hat{\mathbf{M}}$ can be interpreted as the directions in parameter space where we have no information on the trust region metric from observed data.

In order to produce a stable, uncertainty-aware policy update, we should restrict our attention to directions in parameter space where an estimate of the trust region metric is available. Mathematically, this means we are interested in finding a least-squares solution to $\hat{\mathbf{M}}\mathbf{v}=\hat{\mathbf{g}}$ that is contained in the row space of $\hat{\mathbf{M}}$ (equivalently, the range of $\hat{\mathbf{M}}$ since the matrix is symmetric). The unique least-squares solution that satisfies this restriction is $\mathbf{v} = \hat{\mathbf{M}}^+\hat{\mathbf{g}}$, where $\hat{\mathbf{M}}^+$ denotes the Moore-Penrose pseudoinverse of $\hat{\mathbf{M}}$. It is important to note that the standard implementation of TRPO does not produce this update direction in general, leading to unstable and inefficient updates when sample sizes are small.

If $\hat{\mathbf{M}}$ has rank $p < d$, it can be written as $\hat{\mathbf{M}}=\mathbf{UDU'}$ where $\mathbf{U} \in \mathbb{R}^{d \times p}$ is an orthonormal eigenbasis for the range of $\hat{\mathbf{M}}$ and $\mathbf{D} \in \mathbb{R}^{p \times p}$ is a diagonal matrix of the corresponding positive eigenvalues. The Moore-Penrose pseudoinverse is $\hat{\mathbf{M}}^+ = \mathbf{U}\mathbf{D}^{-1}\mathbf{U}'$, and the uncertainty-aware update direction can be calculated as $\mathbf{v} =  \mathbf{U}\mathbf{D}^{-1}\mathbf{U}'\hat{\mathbf{g}}$. Intuitively, this solution is obtained by first restricting $\hat{\mathbf{M}}$ and $\hat{\mathbf{g}}$ to the $p$-dimensional subspace spanned by the basis $\mathbf{U}$, finding the unique solution to the resulting $p$-dimensional system of equations, and representing this solution in terms of its coordinates in parameter space.

Unfortunately, standard methods for computing a decomposition of $\hat{\mathbf{M}}$ are computationally intractable in high dimensions. Rather than considering the full range of $\hat{\mathbf{M}}$ spanned by $\mathbf{U}$, we propose to instead restrict policy updates to a low-rank subspace of the range using random projections \citep{halko_2011}. By generating $m \ll d$ random projections, we can efficiently calculate a basis for this subspace and the corresponding uncertainty-aware update direction (see Algorithm~\ref{alg:update} for details). Because the subspace is contained in the range of $\hat{\mathbf{M}}$ by construction, we preserve our original goal of restricting policy updates to directions in parameter space where trust region information is available. 

Our update method provides the additional benefit of generating a sufficient statistic that can be stored efficiently in memory ($\mathbf{Y}$ in Algorithm~\ref{alg:update}), which is not the case in TRPO. Because the sufficient statistic can be stored in memory, we can utilize exponential moving averages to produce more stable estimates of the trust region matrix \citep{wu_2017,kingma_2015}. This additional source of stability can be implemented through minor modifications to Algorithm~\ref{alg:update}, which we detail in the Appendix.

\begin{algorithm}[t]
\KwIn{sample-based estimates $\hat{\mathbf{g}} \in \mathbb{R}^{d}$, $\hat{\mathbf{M}} \in \mathbb{R}^{d \times d}$; random matrix $\boldsymbol\Omega \in \mathbb{R}^{d \times m}$.}
\BlankLine
Generate $m$ random projections onto the range of $\hat{\mathbf{M}}$: 
\begin{equation*}
\mathbf{Y}=\hat{\mathbf{M}}\boldsymbol\Omega.
\end{equation*} \\
Construct basis $\mathbf{Q} \in \mathbb{R}^{d \times \ell}$, $\ell \leq m$, with orthonormal vectors via the singular value decomposition of $\mathbf{Y}$. \\
\BlankLine
Compute projections of $\hat{\mathbf{M}}$, $\hat{\mathbf{g}}$ in the $\ell$-dimensional subspace spanned by $\mathbf{Q}$: 
\begin{equation*}
\widetilde{\mathbf{M}} = \mathbf{Q}'\hat{\mathbf{M}}\mathbf{Q}  \in \mathbb{R}^{\ell \times \ell}, \quad \widetilde{\mathbf{g}} = \mathbf{Q}'\hat{\mathbf{g}} \in \mathbb{R}^{\ell}.
\end{equation*} \\
Construct  the eigenvalue decomposition of $\widetilde{\mathbf{M}}$:
\begin{equation*}
\widetilde{\mathbf{M}} = \mathbf{V}\boldsymbol\Lambda\mathbf{V}'.
\end{equation*} \\
Solve the $\ell$-dimensional system $\widetilde{\mathbf{M}}\mathbf{y} = \widetilde{\mathbf{g}}$ for $\mathbf{y} \in \mathbb{R}^{\ell}$:
\begin{equation*}
\mathbf{y} = \widetilde{\mathbf{M}}^{-1}\widetilde{\mathbf{g}} = \mathbf{V}\boldsymbol\Lambda^{-1}\mathbf{V}'\mathbf{Q}'\hat{\mathbf{g}}.
\end{equation*} \\
Represent $\mathbf{y}$ in parameter space coordinates to obtain an uncertainty-aware update direction $\mathbf{v} \in \mathbb{R}^{d}$:
\begin{equation*}
\mathbf{v} = \mathbf{Q}\mathbf{y} = \mathbf{Q}\mathbf{V}\boldsymbol\Lambda^{-1}\mathbf{V}'\mathbf{Q}'\hat{\mathbf{g}}.
\end{equation*} \\
\caption{Uncertainty-Aware Update Direction}\label{alg:update}
\end{algorithm}

\section{Algorithm}\label{sec:alg}

By applying the uncertainty-aware trust region from Definition~\ref{def:UAtrust} and computing an uncertainty-aware update direction via Algorithm~\ref{alg:update}, we develop a robust policy optimization method that adapts to the uncertainty present in the sample-based estimates of both the policy gradient and the trust region metric. These important modifications to the standard trust region approach result in our algorithm Uncertainty-Aware Trust Region Policy Optimization (UA-TRPO), which is detailed in Algorithm~\ref{alg:UATRPO}.

\begin{algorithm}[t]
\KwIn{initial policy parameterization $\boldsymbol\theta_0 \in \mathbb{R}^d$;\newline trust region parameters $\delta_{\text{UA}}, c, \alpha$; \newline random matrix $\boldsymbol\Omega \in \mathbb{R}^{d \times m}$.}
\BlankLine
\For{$k=0,1,2,\ldots$}{
	\BlankLine
	Collect sample trajectories $\tau_1,\ldots,\tau_n \sim \pi_{\boldsymbol\theta_k}$.
	\BlankLine
	Calculate sample-based estimates of the policy gradient $\hat{\mathbf{g}}$ and uncertainty-aware trust region matrix $\hat{\mathbf{M}} = \hat{\mathbf{F}} + c R_n^2 \hat{\mathbf{\Sigma}}$.
	\BlankLine
	Use Algorithm~\ref{alg:update} to compute an uncertainty-aware update direction $\mathbf{v}$.
	\BlankLine
	Apply the policy update:
	\begin{equation*}
	\boldsymbol\theta_{k+1} = \boldsymbol\theta_k + \eta \mathbf{v}, \quad \eta = \sqrt{\frac{2\delta_{\text{UA}}}{ \mathbf{v}'\hat{\mathbf{M}}\mathbf{v}}}.
	\end{equation*} \\
}

\caption{Uncertainty-Aware Trust Region \\ Policy Optimization (UA-TRPO)}\label{alg:UATRPO}
\end{algorithm}

\section{Experiments}\label{sec:experiments}

In our experiments, we aim to investigate the robustness and stability of TRPO and UA-TRPO when a limited amount of data is used for each policy update. In order to accomplish this, we perform simulations on several MuJoCo environments \citep{todorov_2012} in OpenAI Gym \citep{brockman_2016}. We focus on the locomotion tasks in OpenAI Baselines' \citep{dhariwal_2017} MuJoCo1M benchmark set (Swimmer-v3, Hopper-v3, HalfCheetah-v3, and Walker2d-v3), all of which have continuous, high-dimensional state and action spaces. 

Because we are interested in evaluating the performance of TRPO and UA-TRPO when updates must be made from limited data, we perform policy updates every $1{,}000$ steps in our experiments. The tasks we consider all have a maximum time horizon of $1{,}000$, so our choice of batch size represents as little as one trajectory per policy update. Most implementations of TRPO in the literature make use of larger batch sizes, such as $5{,}000$ \citep{henderson_2018}, $25{,}000$ \citep{wu_2017}, or $50{,}000$ \citep{duan_2016} steps per policy update. We run each experiment for a total of one million steps, and we consider $50$ random seeds.

We represent our policy $\pi_{\boldsymbol\theta}$ as a multivariate Gaussian distribution, where the mean action for a given state is parameterized by a neural network with two hidden layers of 64 units each and tanh activations. The standard deviation is parameterized separately, and is independent of the state. This is a commonly used policy structure in deep reinforcement learning with continuous actions \citep{henderson_2018}. The combination of high-dimensional state and action spaces with our neural network policy representation results in policy gradients with dimension $d$ ranging between $4{,}800$ and $5{,}800$.

We use the hyperparameters from \citet{henderson_2018} for our implementation of TRPO, which includes $\delta_{\text{KL}} = 0.01$ (see Equation~\eqref{eq:TRPO}). For UA-TRPO, we use $\delta_{\text{UA}}=0.03$, $c=6\mathrm{e}{-4}$, and $\alpha=0.05$ for the inputs to Algorithm~\ref{alg:UATRPO} in all of our experiments. We determined $\delta_{\text{UA}}$ through cross validation, where the trade-off parameter $c$ was chosen so that on average the KL divergence between consecutive policies is the same as in TRPO to provide a fair comparison. See the Appendix for additional details.

\subsubsection{Robustness Comparison.}
In order to evaluate the robustness of TRPO and UA-TRPO, we consider the conditional value at risk (CVaR) of cumulative reward across our $50$ trials. For a given $\kappa \in [0,1]$, $\kappa$-CVaR represents the expected value of the bottom $\kappa$ quantiles. 

First, we consider the CVaR of final performance after one million steps of training. Figure~\ref{fig:cvar_final} displays the $\kappa$-CVaR of final performance for all $\kappa \in [0,1]$. For small values of $\kappa$ where $\kappa$-CVaR represents a more robust measure of performance, UA-TRPO is comparable to or outperforms TRPO across all environments. In all environments except Walker2d-v3, the final $\kappa$-CVaR of UA-TRPO exceeds that of TRPO for almost all values of $\kappa \in [0,1]$. 

\begin{figure}
\centering
\includegraphics[width=\columnwidth]{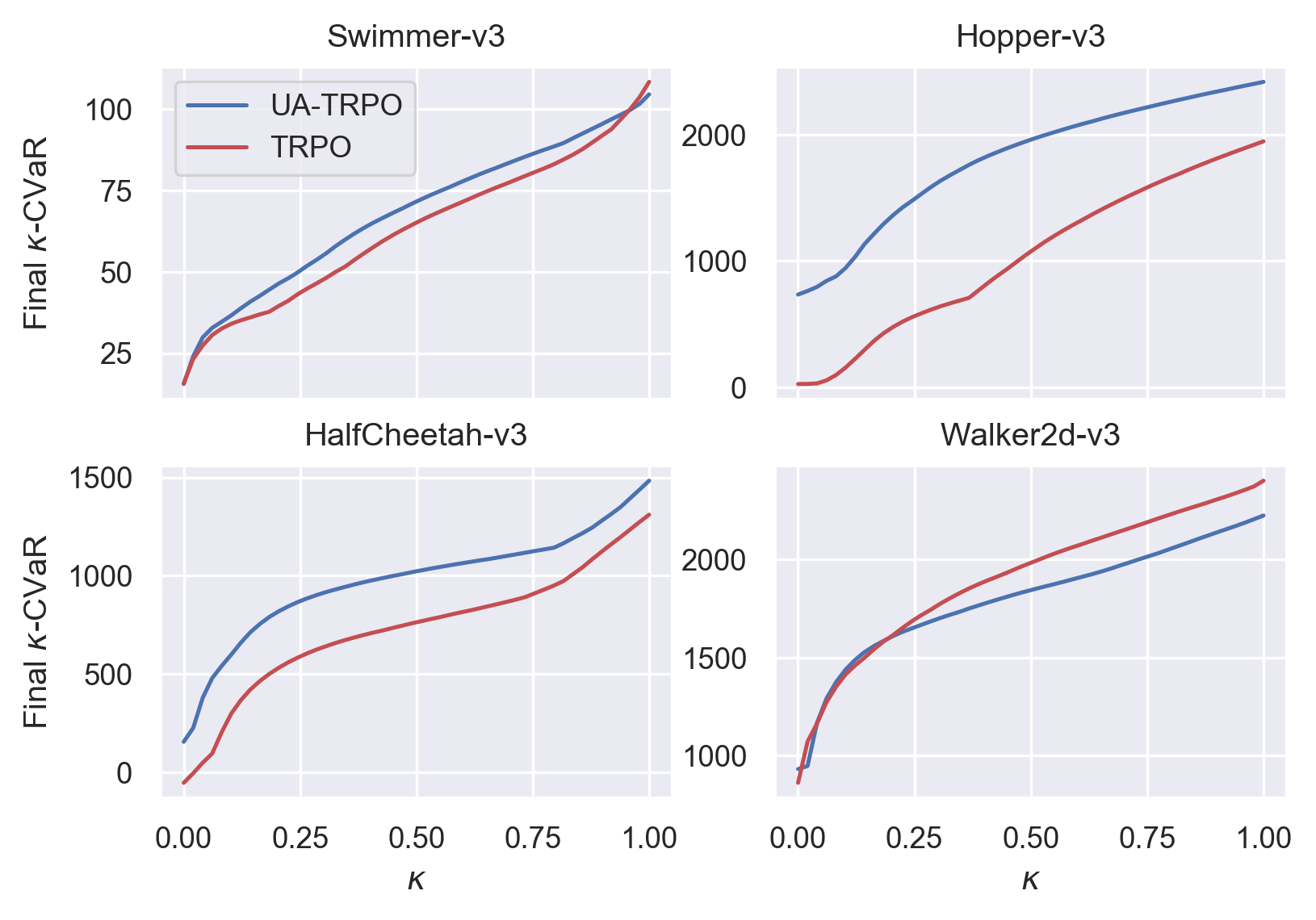}
\caption{$\kappa$-CVaR of final performance for all $\kappa \in [0,1]$.}\label{fig:cvar_final}
\end{figure}

As shown in Figure~\ref{fig:cvar_over_time}, the robustness of UA-TRPO extends beyond final performance. In all environments, UA-TRPO also demonstrates comparable or improved $20\%$-CVaR throughout the training process. In addition, we see that TRPO actually results in a decrease in $20\%$-CVaR over time in less stable environments such as Hopper-v3. Clearly, the potential for such instability would be unacceptable in a high-stakes real-world setting.

\begin{figure}
\centering
\includegraphics[width=\columnwidth]{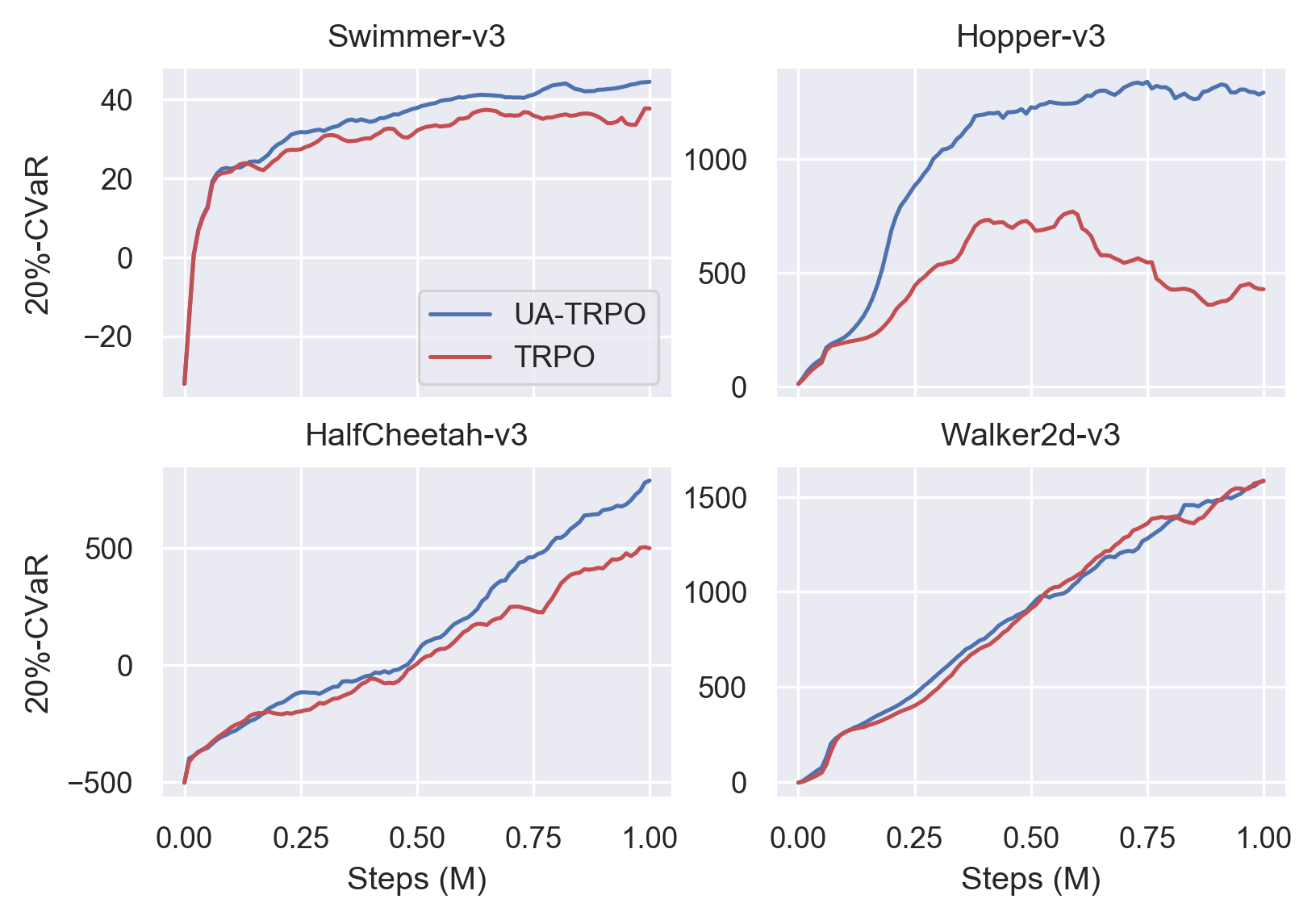}
\caption{$20\%$-CVaR of performance throughout training.}\label{fig:cvar_over_time}
\end{figure}

\subsubsection{Average Performance Comparison.}
Because UA-TRPO is a conservative approach whose primary objective is to guarantee robust policy improvement, it is possible for average performance to suffer in order to achieve robustness. However, the uncertainty-aware trust region automatically adapts to the level of uncertainty present in the problem, which prevents the algorithm from being more conservative than it needs to be. As a result, we find that UA-TRPO is able to improve robustness without sacrificing average performance. We see in Figure~\ref{fig:ave_overtime_base} that there is no statistically significant difference in average performance between TRPO and UA-TRPO in more stable tasks such as Swimmer-v3, HalfCheetah-v3, and Walker2d-v3. In less stable environments such as Hopper-v3, beyond robustness benefits, UA-TRPO also leads to a statistically significant improvement in average performance.

\begin{figure}
\centering
\includegraphics[width=\columnwidth]{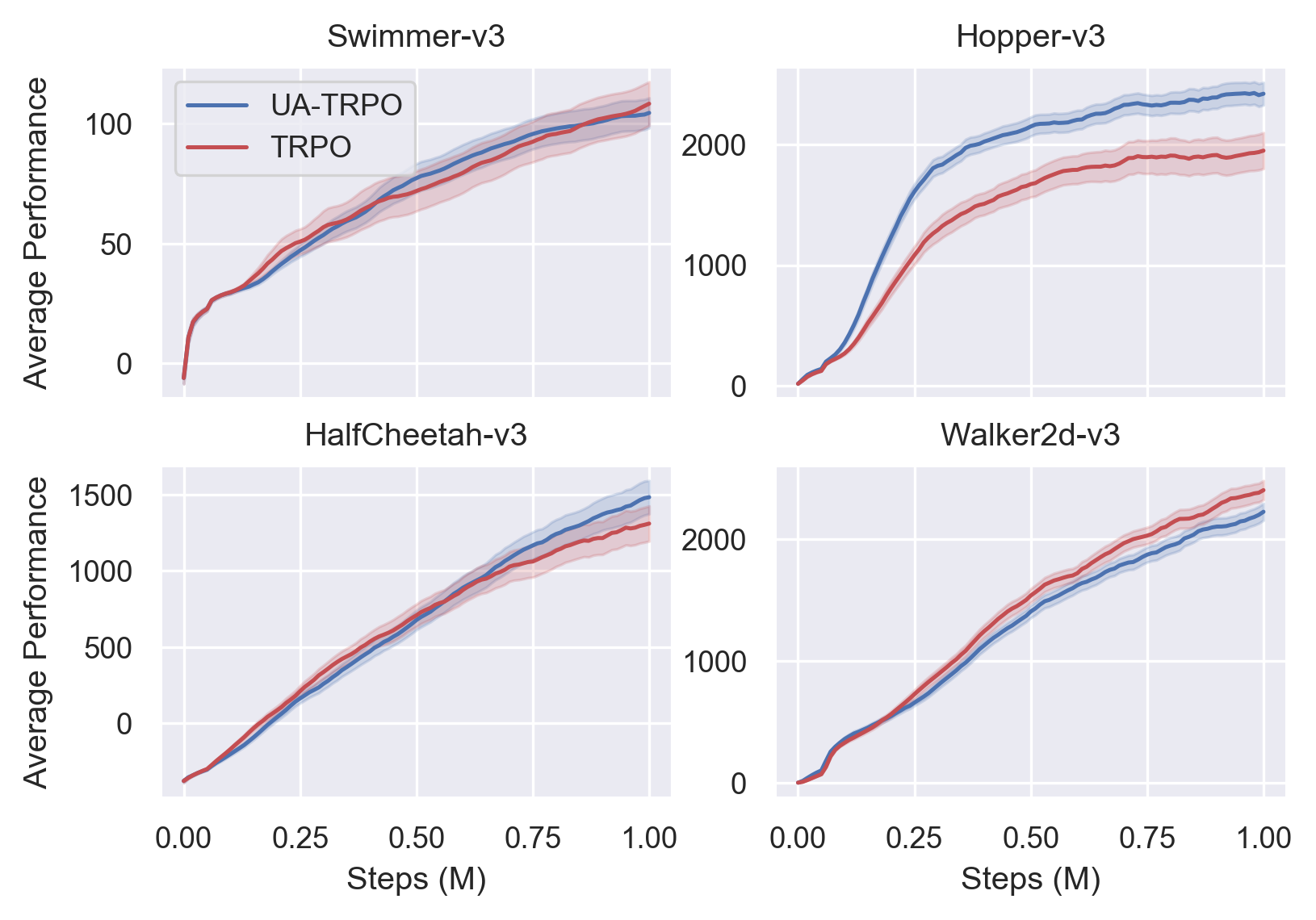}
\caption{Average performance throughout training. Shading denotes one standard error.}\label{fig:ave_overtime_base}
\end{figure}

\subsubsection{Adversarial Gradient Noise.}
We further investigate the robustness of TRPO and UA-TRPO by introducing adversarial noise to the sample-based gradient estimates used to determine policy updates. For each dimension of the gradient, we add noise in the opposite direction of the sample-based estimate. We set the magnitude of this noise to be the standard error of the policy gradient estimate in each dimension, which allows our adversarial noise to mimic the impact of a plausible level of finite-sample estimation error.

We show the impact of these adversarial gradient perturbations on average performance in Figure~\ref{fig:ave_overtime_agn100}. As expected, we see a meaningful decrease in performance compared to Figure~\ref{fig:ave_overtime_base} for both TRPO and UA-TRPO due to the adversarial nature of the noise. However, because the uncertainty-aware trust region penalizes directions with high variance, UA-TRPO is considerably more robust to this noise than TRPO. In the presence of adversarial noise, UA-TRPO demonstrates a statistically significant increase in average performance compared to TRPO for all tasks except Swimmer-v3, including improvements of 98 and 66 percent on HalfCheetah-v3 and Walker2d-v3, respectively. 

\subsubsection{Quality of Proposed Policy Updates.}
In order to better understand the performance of TRPO and UA-TRPO with small sample sizes, we analyze the quality of the policy updates proposed by these algorithms. In particular, TRPO is designed to target a specific KL divergence step size determined by $\delta_{\text{KL}}$, so we compare the actual KL divergence of the proposed policy update to the KL divergence estimated by the algorithm. Figure~\ref{fig:klratio} shows that almost $40$ percent of the updates proposed by TRPO are at least two times larger than desired, and almost $20$ percent of the updates are at least three times larger than desired. This major discrepancy between actual and estimated KL divergence is caused by the inability to produce accurate estimates of the trust region matrix from only a small number of samples. TRPO corrects this issue by using a backtracking line search, but the algorithm's strong reliance on this post-optimization process shows that the use of small batch sizes results in inefficient policy updates. 

On the other hand, we see in Figure~\ref{fig:klratio} that UA-TRPO produces policy updates in line with the algorithm's intended goal (i.e., a ratio near one). This is accomplished by using uncertainty-aware update directions to generate policy updates, which prevent the algorithm from exploiting directions in parameter space where trust region information is not available due to limited data.

\begin{figure}
\centering
\includegraphics[width=\columnwidth]{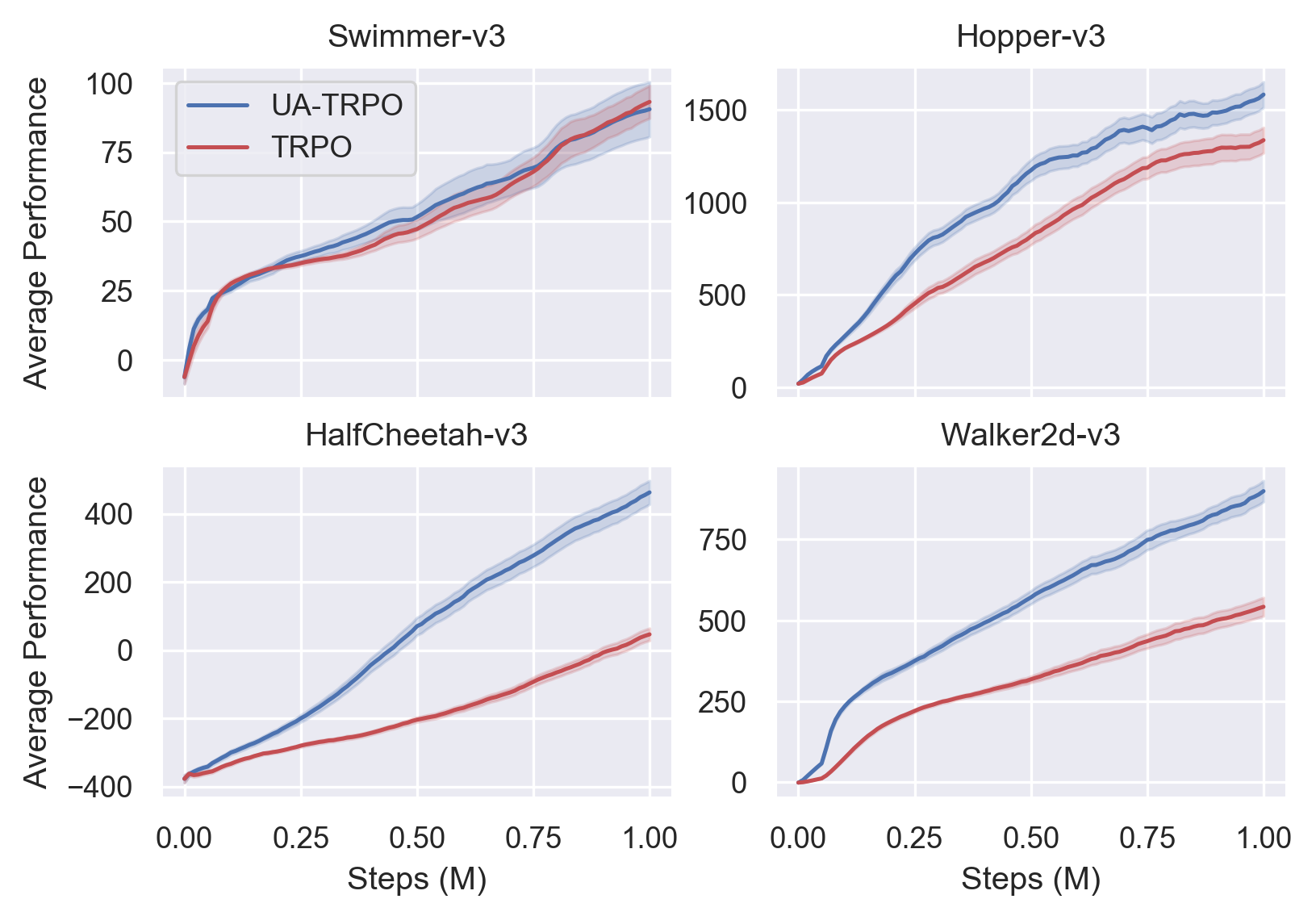}
\caption{Average performance throughout training with adversarial gradient noise. Shading denotes one standard error.}\label{fig:ave_overtime_agn100}
\end{figure}

\section{Related Work}\label{sec:related}

The difficulties of using sample-based estimates in reinforcement learning have been a topic of interest for quite some time. In particular, policy gradient methods are known to suffer from high variance. Variance reduction techniques have been proposed to mitigate this issue, such as the use of baselines and bootstrapping in advantage function estimation \citep{sutton_2000,schulman_2016}. Recently, there has been renewed interest in variance reduction in the stochastic optimization literature \citep{johnson_2013}, and some of these methods have been applied to reinforcement learning \citep{papini_2018}. This line of research is orthogonal to our work, as UA-TRPO adapts to the variance that remains after these techniques have been applied.

Conservative policy optimization approaches such as Conservative Policy Iteration \citep{kakade_2002}, TRPO \citep{schulman_2015}, and Proximal Policy Optimization \citep{schulman_2017} consider a lower bound on policy improvement to generate stable policy updates. However, these approaches rely on large batch sizes to control sample-based estimation error. In settings where access to samples may be limited, this approach to reducing estimation error may not be feasible. \citet{li_2011} developed the ``knows what it knows'' (KWIK) framework for this scenario, which allows an algorithm to choose not to produce an output when uncertainty is high. Several approaches in reinforcement learning can be viewed as applications of this uncertainty-aware framework. \citet{laroche_2019} bootstrapped the learned policy with a known baseline policy in areas of the state space where data was limited, while \citet{thomas_2015} only produced updates when the value of the new policy exceeded a baseline value with high probability. Our approach is also motivated by the KWIK framework, restricting updates to areas of the parameter space where information is available and producing updates close to the current policy when uncertainty is high. 

Other methods have considered adversarial formulations to promote stability in the presence of uncertainty. \citet{rajeswaran_2017} trained a policy with TRPO using adversarially selected trajectories from an ensemble of models, while \citet{pinto_2017} applied TRPO in the presence of an adversary that was jointly trained to destabilize learning. Because both of these methods are motivated by the goal of sim-to-real transfer, they assume that the environment can be altered during training. We introduce an adversary specifically designed to address finite-sample estimation error through a worst-case formulation over $\mathcal{U}_n$, but our method does not require any changes to be made to the underlying environment. 

Finally, our approach is related to adaptive learning rate methods such as Adam \citep{kingma_2015} that have become popular in stochastic optimization. These methods use estimates of the second moment of the gradient to adapt the step size throughout training. We accomplish this by incorporating $\mathbf{\Sigma}$ in our trust region. In fact, adaptive learning rate methods can be interpreted as using an uncentered, diagonal approximation of $\mathbf{\Sigma}$ to generate uncertainty-aware updates.  

\begin{figure}
\centering
\includegraphics[width=\columnwidth]{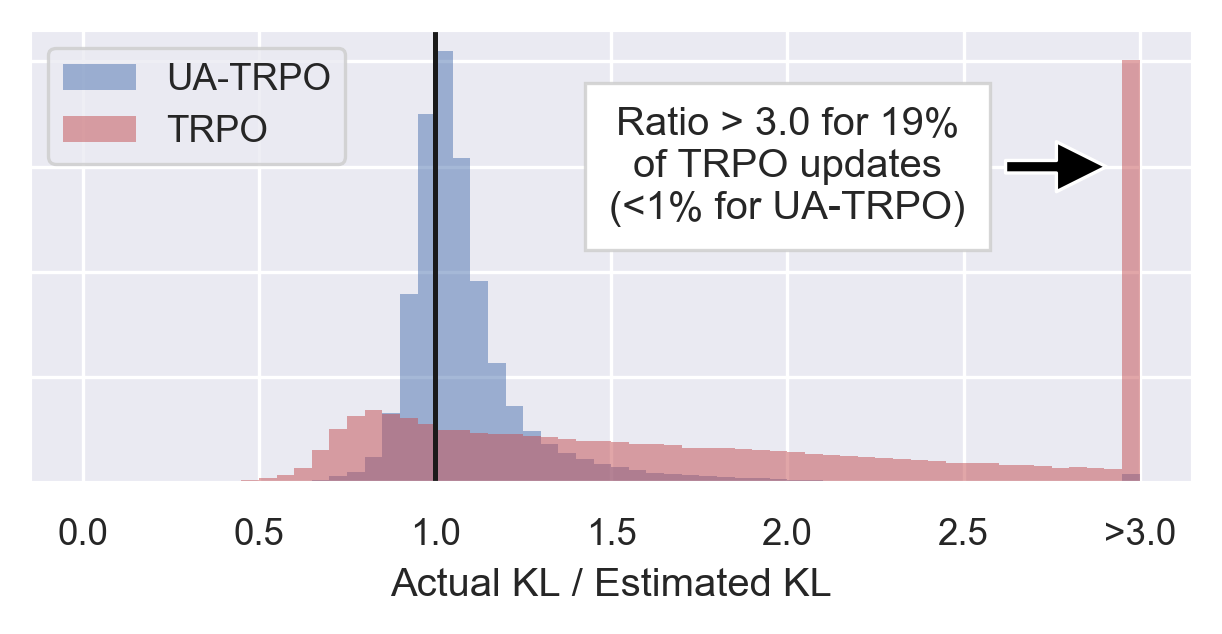}
\caption{Ratio of actual to estimated KL divergence for proposed policy updates, prior to the application of a backtracking line search. Histogram includes proposed policy updates across all environments and all random seeds.}\label{fig:klratio}
\end{figure}

\section{Conclusion}\label{sec:conc}

We have presented a principled approach to policy optimization in the presence of finite-sample estimation error. We developed techniques that adapt to the uncertainty introduced by sample-based estimates of the policy gradient and trust region metric, resulting in robust and stable updates throughout the learning process. Importantly, our algorithm, UA-TRPO, directly controls estimation error in a scalable and practical way, making it compatible with the use of rich, high-dimensional neural network policy representations. This represents an important step towards developing deep reinforcement learning methods that can be used for real-world decision making tasks where data is limited and stable performance is critical.

\section*{Ethics Statement}

Reinforcement learning has the potential to improve decision making across many important application areas such as robotics and healthcare, but these techniques will only be useful to society if they can be trusted to produce robust and stable results. Our work makes progress towards accomplishing this goal by addressing major sources of error that currently prevent the real-world adoption of policy optimization algorithms. We do not believe our contributions introduce any ethical issues that could negatively impact society.

\section*{Acknowledgments}

This research was partially supported by the NSF under grants ECCS-1931600, DMS-1664644, CNS-1645681, and IIS-1914792, by the ONR under grant N00014-19-1-2571, by the NIH under grants R01 GM135930 and UL54 TR004130, by the DOE under grant DE-AR-0001282, by AFOSR under grant FA9550-19-1-0158, by ARPA-E’s NEXTCAR program under grant DE-AR0000796, and by the MathWorks.

\bibliography{5930.QueeneyJ_bibliography}

\section*{Appendix}\label{sec:appendix}

\renewcommand{\thesubsection}{\Alph{subsection}}

\subsection{Useful Definitions and Lemmas}

\begin{definition}[Sub-Gaussian Random Variable]
A random variable $\omega \in \mathbb{R}$ is sub-Gaussian with variance proxy $\sigma^2$ if $\mathbb{E} \left[ \omega \right] = 0$ and its moment generating function satisfies $\mathbb{E} \left[ \exp \left(s\omega\right) \right] \leq \exp \left( s^2\sigma^2/2 \right)$ for all $s \in \mathbb{R}$. We denote this by $\omega \sim \emph{subG}(\sigma^2)$.
\end{definition}

\begin{definition}[Sub-Gaussian Random Vector]
A random vector $\boldsymbol\omega \in \mathbb{R}^d$ is sub-Gaussian with variance proxy $\sigma^2$ if $\mathbf{s}'\boldsymbol\omega \sim \emph{subG}(\sigma^2)$ for all $\mathbf{s} \in \mathcal{S}^{d}$, where $\mathcal{S}^{d} = \left\lbrace \mathbf{s} \in \mathbb{R}^d \mid  \Vert \mathbf{s} \Vert = 1 \right\rbrace$ is the unit sphere in $\mathbb{R}^d$. We denote this by $\boldsymbol\omega \sim \emph{subG}_d(\sigma^2)$.
\end{definition}

\begin{lemma}[Squared Norm Sub-Gaussian Concentration Inequality]\label{lemma:squarenormCI}
Let $\boldsymbol\omega \in \mathbb{R}^d$ be a sub-Gaussian random vector with variance proxy $\sigma^2$. Consider $\hat{\boldsymbol\omega}_1,\ldots,\hat{\boldsymbol\omega}_n$ independent, identically distributed random samples of $\boldsymbol\omega$, with $\hat{\mathbf{x}} = \frac{1}{n} \sum_{i=1}^n \hat{\boldsymbol\omega}_i$ their sample average. Fix $\alpha \in (0,1)$. Then,
\begin{equation}
P\left(\Vert \hat{\mathbf{x}} \Vert_2^2 \leq \sigma^2 R_n^2 \right) > 1- \alpha,
\end{equation}
where
\begin{equation}
R_n^2 = \frac{1}{n} \left( d + 2\sqrt{d \log \left( \frac{1}{\alpha} \right)} + 2 \log \left( \frac{1}{\alpha} \right) \right).
\end{equation}
\end{lemma}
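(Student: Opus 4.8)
The plan is to reduce the claim to a tail bound on the squared norm of a single sub-Gaussian vector, namely the sample average $\hat{\mathbf{x}}$, and then invoke the quadratic-form concentration inequality of \citet{hsu_2012}. The argument has three stages: promote the directional sub-Gaussianity of $\boldsymbol\omega$ to a sharper variance proxy for $\hat{\mathbf{x}}$, bound the moment generating function (MGF) of $\|\hat{\mathbf{x}}\|_2^2$, and convert this into a tail bound via a Chernoff argument.

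First I would establish that $\hat{\mathbf{x}}$ inherits sub-Gaussianity with an improved variance proxy. Fix a unit vector $\mathbf{s} \in \mathcal{S}^d$. Then $\mathbf{s}'\hat{\mathbf{x}} = \frac{1}{n}\sum_{i=1}^n \mathbf{s}'\hat{\boldsymbol\omega}_i$ is an average of $n$ independent copies of $\mathbf{s}'\boldsymbol\omega \sim \text{subG}(\sigma^2)$. Since the variance proxy of a sum of $n$ independent sub-Gaussians adds to $n\sigma^2$ and scaling by $1/n$ multiplies the variance proxy by $1/n^2$, one obtains $\mathbf{s}'\hat{\mathbf{x}} \sim \text{subG}(\sigma^2/n)$ for every $\mathbf{s}$, i.e. $\hat{\mathbf{x}} \sim \text{subG}_d(\sigma^2/n)$. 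This is the only place the i.i.d. structure and the averaging enter, and it is routine.

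Next I would bound the MGF of $\|\hat{\mathbf{x}}\|_2^2$. The obstacle is that the squared norm is a quadratic form, so directional sub-Gaussianity does not control it directly; the standard device is a Gaussian lifting argument, and this comparison step is the technical crux. Introducing an auxiliary $\mathbf{z} \sim \mathcal{N}(\mathbf{0}, \mathbf{I}_d)$ independent of $\hat{\mathbf{x}}$, the identity $\mathbb{E}_{\mathbf{z}}[\exp(\sqrt{2\lambda}\, \mathbf{z}'\hat{\mathbf{x}})] = \exp(\lambda\|\hat{\mathbf{x}}\|_2^2)$ lets me write $\mathbb{E}[\exp(\lambda\|\hat{\mathbf{x}}\|_2^2)] = \mathbb{E}_{\mathbf{z}}\mathbb{E}_{\hat{\mathbf{x}}}[\exp(\sqrt{2\lambda}\,\mathbf{z}'\hat{\mathbf{x}})]$ after exchanging expectations. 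For fixed $\mathbf{z}$, the inner expectation is controlled by the sub-Gaussian bound applied to the (non-unit) direction $\sqrt{2\lambda}\,\mathbf{z}$, yielding $\exp(\lambda (\sigma^2/n)\|\mathbf{z}\|_2^2)$; averaging over $\mathbf{z}$ then bounds the MGF by that of a scaled $\chi^2_d$ variable, valid for $\lambda < n/(2\sigma^2)$.

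Finally I would apply a Chernoff argument. With the $\chi^2_d$ MGF in hand, optimizing the exponent, or equivalently invoking the Laurent--Massart chi-squared tail bound $P(\chi^2_d \ge d + 2\sqrt{dt} + 2t) \le e^{-t}$, gives $P(\|\hat{\mathbf{x}}\|_2^2 > \frac{\sigma^2}{n}(d + 2\sqrt{dt} + 2t)) \le e^{-t}$. Choosing $t = \log(1/\alpha)$ makes the right-hand side equal to $\alpha$ and identifies the bracketed quantity with $n R_n^2$, so that $P(\|\hat{\mathbf{x}}\|_2^2 > \sigma^2 R_n^2) \le \alpha$; taking complements yields $P(\|\hat{\mathbf{x}}\|_2^2 \le \sigma^2 R_n^2) \ge 1 - \alpha$, with the strict inequality in the statement following from the corresponding strict tail bound. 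I expect the Gaussian lifting and $\chi^2$-comparison to be the only nontrivial step; the variance-proxy computation and the substitution $t = \log(1/\alpha)$ are bookkeeping.
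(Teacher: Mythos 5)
Your proposal is correct and takes essentially the same route as the paper: both reduce the claim to the observation that the sample average $\hat{\mathbf{x}}$ is sub-Gaussian with variance proxy $\sigma^2/n$ and then apply the quadratic-form tail bound of Theorem~2.1 in \citet{hsu_2012}. The only difference is that you additionally unpack the proof of that cited theorem (the Gaussian lifting, $\chi^2$ comparison, and Chernoff step), which the paper invokes as a black box.
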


\begin{proof}
Consider the random vector $\mathbf{x} = \frac{1}{n} \sum_{i=1}^n \boldsymbol\omega_i$, where $\boldsymbol\omega_i$, $i=1,\ldots,n$, are independent, identically distributed copies of the random vector $\boldsymbol\omega$. Because $\boldsymbol\omega_i \sim \text{subG}_d(\sigma^2)$, we have that $\mathbf{x} \sim \text{subG}_d(\sigma^2/n)$. Then, the result immediately follows as a special case of Theorem~2.1 in \citet{hsu_2012} applied to the sample $\hat{\mathbf{x}}$ of the random vector $\mathbf{x}$. 
\end{proof}

\subsection{Proof of Lemma~\ref{lemma:U}}

\begin{proof}
Define $\boldsymbol\omega = \mathbf{\Sigma}^{-1/2}(\boldsymbol\xi - \mathbf{g})$. Note that $\boldsymbol\omega \sim \text{subG}_d(\sigma^2)$ by assumption. Therefore, by applying Lemma~\ref{lemma:squarenormCI} with $\hat{\mathbf{x}} = \mathbf{\Sigma}^{-1/2}(\hat{\mathbf{g}} - \mathbf{g})$, we have that 
\begin{equation}
P\left( (\hat{\mathbf{g}} - \mathbf{g})'\mathbf{\Sigma}^{-1}(\hat{\mathbf{g}} - \mathbf{g}) \leq \sigma^2 R_n^2 \right) > 1 - \alpha. 
\end{equation}
This implies $\mathbf{g} \in \mathcal{U}_n$ with probability at least $1-\alpha$.
\end{proof}

\subsection{Detailed Proof of Theorem~\ref{thm:robLB}}

\begin{proof}
Consider the function
\begin{multline}
f(\mathbf{u}) = \mathbf{u}'(\boldsymbol\theta-\boldsymbol\theta_k) \\ - \frac{\gamma \epsilon_{\boldsymbol\theta}}{(1-\gamma)^2} \sqrt{ (\boldsymbol\theta-\boldsymbol\theta_k)' \mathbf{F} (\boldsymbol\theta-\boldsymbol\theta_k) }. \quad
\end{multline}
Then, the robust (i.e., worst-case with respect to $\mathcal{U}_n$) lower bound \eqref{eq:robLBgeneral} can be written as $\min_{\mathbf{u} \in \mathcal{U}_n} f(\mathbf{u})$, where $\mathcal{U}_n$ is defined as in Lemma~\ref{lemma:U}. 

We first show that the robust lower bound \eqref{eq:robLBgeneral} is equivalent to $L_{n}(\boldsymbol\theta)$ in \eqref{eq:robLB}. Note that the second term in the objective $f(\mathbf{u})$ does not depend on $\mathbf{u}$, so we can ignore it when solving the minimization problem over $\mathcal{U}_n$. Omitting this term and using the definition of $\mathcal{U}_n$ from Lemma~\ref{lemma:U}, the resulting minimization problem can be written as
\begin{equation}\label{eq:robLBdetailed}
\begin{array}{rl}
\displaystyle \min_{\mathbf{u}} & \displaystyle  \mathbf{u}'(\boldsymbol\theta-\boldsymbol\theta_k) \\
\text{s.t.} & (\mathbf{u} - \hat{\mathbf{g}})'\mathbf{\Sigma}^{-1}(\mathbf{u} - \hat{\mathbf{g}}) \leq \sigma^2 R_n^2.
\end{array}
\end{equation}
This is a minimization of a linear function subject to a convex quadratic constraint. The Lagrangian corresponding to \eqref{eq:robLBdetailed} can be written
\begin{multline}\label{eq:lagrangian}
G(\mathbf{u},\nu) = \mathbf{u}'(\boldsymbol\theta-\boldsymbol\theta_k) \\ + \nu \left[ (\mathbf{u} - \hat{\mathbf{g}})'\mathbf{\Sigma}^{-1}(\mathbf{u} - \hat{\mathbf{g}}) - \sigma^2 R_n^2 \right],
\end{multline}
where $\nu \geq 0$ is the Lagrange multiplier associated with the constraint. We apply sufficient conditions to \eqref{eq:lagrangian} to find that the Lagrangian is minimized at
\begin{equation}
\mathbf{u} = \hat{\mathbf{g}} - \frac{1}{2\nu} \mathbf{\Sigma}(\boldsymbol\theta-\boldsymbol\theta_k).
\end{equation}
By plugging this value back into \eqref{eq:lagrangian}, the dual function $D(\nu) = \min_{\mathbf{u}} G(\mathbf{u},\nu)$ can be written in closed form as
\begin{multline}\label{eq:dualfunc}
D(\nu) = \hat{\mathbf{g}}'(\boldsymbol\theta-\boldsymbol\theta_k) \\ - \frac{1}{4\nu} (\boldsymbol\theta-\boldsymbol\theta_k)'\mathbf{\Sigma}(\boldsymbol\theta-\boldsymbol\theta_k) - \nu \sigma^2 R_n^2.
\end{multline}
The corresponding dual problem is $\max_{\nu \geq 0} D(\nu)$. $D(\nu)$ is concave in $\nu$ for $\nu>0$, so we can apply sufficient conditions to find the solution to the dual problem. $D(\nu)$ is maximized at 
\begin{equation}
\nu = \frac{1}{2 \sigma R_n} \sqrt{(\boldsymbol\theta-\boldsymbol\theta_k)'\mathbf{\Sigma}(\boldsymbol\theta-\boldsymbol\theta_k)},
\end{equation}
which results in the following optimal value of the dual problem:
\begin{multline}
\max_{\nu \geq 0} D(\nu) = \hat{\mathbf{g}}'(\boldsymbol\theta-\boldsymbol\theta_k) \\ -  \sigma R_n \sqrt{(\boldsymbol\theta-\boldsymbol\theta_k)'\mathbf{\Sigma}(\boldsymbol\theta-\boldsymbol\theta_k)}. \quad
\end{multline}
By strong duality, this is the optimal value of the primal problem \eqref{eq:robLBdetailed}. This implies that the robust lower bound \eqref{eq:robLBgeneral} is equivalent to
\begin{multline}
L_{n}(\boldsymbol\theta) = \hat{\mathbf{g}}'(\boldsymbol\theta-\boldsymbol\theta_k)  \\
{} - \frac{\gamma \epsilon_{\boldsymbol\theta}}{(1-\gamma)^2} \sqrt{ (\boldsymbol\theta-\boldsymbol\theta_k)' \mathbf{F} (\boldsymbol\theta-\boldsymbol\theta_k) } \\
{} - \sigma R_n \sqrt{(\boldsymbol\theta-\boldsymbol\theta_k)' \mathbf{\Sigma} (\boldsymbol\theta-\boldsymbol\theta_k)}, \quad
\end{multline}
where we have included the second term of $f(\mathbf{u})$ that we omitted to begin the proof.

We now show that $L_{n}(\boldsymbol\theta)$ is a lower bound for $J(\boldsymbol\theta) - J(\boldsymbol\theta_k)$ with probability at least $1-\alpha$, up to first and second order approximation error. By Lemma~\ref{lemma:U}, $\mathbf{g} \in \mathcal{U}_n$ with probability at least $1-\alpha$. If $\mathbf{g} \in \mathcal{U}_n$, we have that $f(\mathbf{g}) \geq \min_{\mathbf{u} \in \mathcal{U}_n} f(\mathbf{u})$. Note that $f(\mathbf{g}) = L(\boldsymbol\theta)$ where $L(\boldsymbol\theta)$ is the approximate lower bound defined in \eqref{eq:approxLB}, and $\min_{\mathbf{u} \in \mathcal{U}_n} f(\mathbf{u}) = L_n(\boldsymbol\theta)$ as shown above. Therefore, $L(\boldsymbol\theta) \geq L_{n}(\boldsymbol\theta)$ with probability at least $1-\alpha$. By Lemma~\ref{lemma:pdLB}, $L(\boldsymbol\theta)$ is a lower bound for $J(\boldsymbol\theta) - J(\boldsymbol\theta_k)$ up to first and second order approximation error, which implies that with probability at least $1-\alpha$ so is $L_{n}(\boldsymbol\theta)$. 
\end{proof}

\subsection{Moving Average Trust Region Estimate}

When applying trust region methods with neural network policy representations, it is typically not practical to store the trust region matrix in memory due to its size. Instead, the trust region matrix is accessed through matrix-vector products. In TRPO, we use matrix-vector products when applying conjugate gradient steps to determine the update direction. In UA-TRPO, we use matrix-vector products to calculate random projections $\mathbf{Y} \in \mathbb{R}^{d \times m}$. 

Because we utilize $m \ll d$ random projections, the statistic $\mathbf{Y}$ can be stored efficiently in memory. As a result, we can maintain an exponential moving average (EMA) of this statistic that can be used to produce a more accurate low-rank approximation of the uncertainty-aware trust region matrix $\mathbf{M}$. By using exponential moving averages, we leverage recent estimates of $\mathbf{M}$ to inform the current estimate. UA-TRPO restricts the policy from changing too much between iterations, so it is reasonable to believe that recent estimates contain useful information. This can be interpreted as using a larger sample size to estimate $\mathbf{M}$ \citep{wu_2017}, which can produce a much more accurate estimate when batch sizes are small.

We can implement an EMA estimate of $\mathbf{M}$ with weight parameter $\beta$, which we denote by $\hat{\mathbf{M}}_{\beta}$, through minor modifications to Algorithm~\ref{alg:update} that we now describe.

\subsubsection*{Exponential Moving Average Statistic.}
First, we maintain two separate EMA statistics $\mathbf{Y}_{\mathbf{F}}^{(k)}$ and $\mathbf{Y}_{\mathbf{\Sigma}}^{(k)}$ for $\mathbf{F}$ and $\mathbf{\Sigma}$, respectively, because the coefficient $cR_n^2$ that controls the trade-off between $\mathbf{F}$ and $\mathbf{\Sigma}$ when forming $\mathbf{M}$ may change from iteration to iteration. We initialize these statistics at $\mathbf{Y}_{\mathbf{F}}^{(0)} = \mathbf{Y}_{\mathbf{\Sigma}}^{(0)} = \mathbf{0}$. For each policy update, we generate random projections $\mathbf{Y}_{\mathbf{F}}$ and $\mathbf{Y}_{\mathbf{\Sigma}}$ based on $\hat{\mathbf{F}}$ and $\hat{\mathbf{\Sigma}}$, respectively, which are estimated using data from the current policy. We use these random projections and the EMA weight parameter $\beta$ to update our EMA statistics. Finally, we combine these statistics using the coefficient $cR_n^2$ calculated based on the current batch, and apply a bias correction to account for initializing our statistics at $\mathbf{0}$ \citep{kingma_2015}. See Algorithm~\ref{alg:EMAstat} for details. 

This results in a statistic $\mathbf{Y}$ that represents random projections onto the range of the EMA estimate $\hat{\mathbf{M}}_{\beta}$. Algorithm~\ref{alg:update}, on the other hand, computes random projections onto the range of the estimate $\hat{\mathbf{M}}$ that is calculated using only samples from the current policy.

\begin{algorithm}[t]
\KwIn{sample-based estimates $\hat{\mathbf{F}}, \hat{\mathbf{\Sigma}} \in \mathbb{R}^{d \times d}$; random matrix $\boldsymbol\Omega \in \mathbb{R}^{d \times m}$;\newline EMA statistics $\mathbf{Y}_{\mathbf{F}}^{(k)}, \mathbf{Y}_{\mathbf{\Sigma}}^{(k)} \in \mathbb{R}^{d \times m}$; \newline EMA weight parameter $\beta$.}
\BlankLine
Generate $m$ random projections onto the ranges of $\hat{\mathbf{F}}$ and $\hat{\mathbf{\Sigma}}$: 
\begin{equation*}
\mathbf{Y}_{\mathbf{F}}=\hat{\mathbf{F}}\boldsymbol\Omega, \qquad \mathbf{Y}_{\mathbf{\Sigma}}=\hat{\mathbf{\Sigma}}\boldsymbol\Omega.
\end{equation*} \\
Update EMA statistics: 
\begin{equation*}
\begin{split}
\mathbf{Y}_{\mathbf{F}}^{(k+1)} &{} = \beta \mathbf{Y}_{\mathbf{F}}^{(k)} + (1-\beta) \mathbf{Y}_{\mathbf{F}},  \\
\mathbf{Y}_{\mathbf{\Sigma}}^{(k+1)} &{} = \beta \mathbf{Y}_{\mathbf{\Sigma}}^{(k)} + (1-\beta) \mathbf{Y}_{\mathbf{\Sigma}}.
\end{split}
\end{equation*} \\
Combine EMA statistics and apply bias correction to represent random projections onto the range of the EMA estimate $\hat{\mathbf{M}}_{\beta}$:
\begin{equation*}
\mathbf{Y} = \frac{1}{1-\beta^{k+1}} \left( \mathbf{Y}_{\mathbf{F}}^{(k+1)} + c R_n^2 \mathbf{Y}_{\mathbf{\Sigma}}^{(k+1)} \right).
\end{equation*} \\
\caption{Exponential Moving Average Statistic}\label{alg:EMAstat}
\end{algorithm}

\subsubsection*{Projection of Trust Region Matrix.}
With this modified construction of $\mathbf{Y}$, the only additional modification required to Algorithm~\ref{alg:update} concerns the projection of the trust region matrix in the low-rank subspace spanned by $\mathbf{Q}$. This step requires access to the estimate of the trust region matrix to compute 
\begin{equation}\label{eq:Mproj}
\widetilde{\mathbf{M}}_{\beta} =  \mathbf{Q}'\hat{\mathbf{M}}_{\beta}\mathbf{Q},
\end{equation}
but we only have access to $\hat{\mathbf{M}}_{\beta}$ through $\mathbf{Y}$. Instead, we can formulate a least-squares problem to approximate $\widetilde{\mathbf{M}}_{\beta}$. By multiplying each side of \eqref{eq:Mproj} by $\mathbf{Q}'\boldsymbol\Omega$ and leveraging the relationship $\hat{\mathbf{M}}_{\beta} \approx \hat{\mathbf{M}}_{\beta}\mathbf{Q}\mathbf{Q}'$ \citep{halko_2011}, we see that

\begin{equation}
\begin{split}
\widetilde{\mathbf{M}}_{\beta}\mathbf{Q}'\boldsymbol\Omega &{} =  \mathbf{Q}'\hat{\mathbf{M}}_{\beta}\mathbf{Q}\mathbf{Q}'\boldsymbol\Omega \\
&{} \approx  \mathbf{Q}'\hat{\mathbf{M}}_{\beta}\boldsymbol\Omega \\
&{} =  \mathbf{Q}'\mathbf{Y}.
\end{split}
\end{equation}
Therefore, we can approximate the projection of $\hat{\mathbf{M}}_{\beta}$ in the subspace spanned by $\mathbf{Q}$ by solving the least-squares problem $\widetilde{\mathbf{M}}_{\beta}\mathbf{Q}'\boldsymbol\Omega = \mathbf{Q}'\mathbf{Y}$ for $\widetilde{\mathbf{M}}_{\beta}$. All other steps in Algorithm~\ref{alg:update} to compute an uncertainty-aware update direction remain unchanged. 

\subsection{Implementation Details}

To aid in reproducibility, we describe additional implementation details not discussed in the Experiments section. Note that all of these choices are based on common implementations in the literature. 

The value function $V_{\boldsymbol\theta}(s)$ is parameterized by a neural network with two hidden layers of 64 units each and tanh activation functions. The advantage values $A_{\boldsymbol\theta}(s,a)$ needed to compute policy gradient estimates are calculated using Generalized Advantage Estimation (GAE) \citep{schulman_2016}, and are standardized within each batch. Observations are standardized using a running mean and standard deviation throughout the training process.

The hyperparameters used to produce the results found in the Experiments section are included in Table~\ref{tab:hyperparam}. We follow the hyperparameter choices of \citet{henderson_2018} for our implementation of TRPO, which includes the hyperparameters listed in the General and TRPO sections of Table~\ref{tab:hyperparam}. For UA-TRPO, we performed cross validation to determine the trust region parameter $\delta_{\text{UA}}$. We considered $\delta_{\text{UA}}=0.02,0.03,0.04$, and we set the trade-off parameter $c$ for each choice of $\delta_{\text{UA}}$ so that on average the KL divergence between consecutive policies was the same as in TRPO ($\delta_{\text{KL}} = 0.01$). This resulted in trade-off parameters $c=2\mathrm{e}{-4},6\mathrm{e}{-4},1\mathrm{e}{-3}$, respectively. The hyperparameters $\delta_{\text{UA}}=0.03, c=6\mathrm{e}{-4}$ were selected based on the criterion of average performance across one million steps.

\begin{table}
\centering
\begin{tabular}{lr}
\hline
 \\
\textbf{\underline{General}} \\ [1ex]
Discount Rate ($\gamma$) & $0.995$ \\ [0.5ex]
GAE Parameter ($\lambda$) & $0.97$ \\ [0.5ex]
Value Function Optimizer & Adam \\ [0.5ex]
Value Function Step Size & $0.001$ \\ [0.5ex]
Value Function Iterations per Update & $5$ \\ [0.5ex]
 \\
\textbf{\underline{TRPO}} \\ [1ex]
Trust Region Parameter ($\delta_{\text{KL}}$) & $0.01$ \\ [0.5ex]
Conjugate Gradient Iterations per Update & $20$ \\ [0.5ex]
Conjugate Gradient Damping Coefficient & $0.1$ \\ [0.5ex]
 \\
\textbf{\underline{UA-TRPO}} \\ [1ex]
Trust Region Parameter ($\delta_{\text{UA}}$) & $0.03$ \\ [0.5ex]
Trade-Off Parameter ($c$) & $6\mathrm{e}{-4}$ \\ [0.5ex]
Confidence Parameter ($\alpha$) & $0.05$ \\ [0.5ex]
Number of Random Projections ($m$) & $200$ \\ [0.5ex]
EMA Weight Parameter ($\beta$) & $0.9$ \\
 \\
\hline
\end{tabular}
\caption{Hyperparameters used in Experiments section.}\label{tab:hyperparam}
\end{table}

In order to calculate estimates of the trust region matrices, we follow the subsampling procedure described in \citet{schulman_2015} with a subsampling factor of $10$. This process leverages the definition of trust region matrices as expectations with respect to $(s,a) \sim d_{\boldsymbol\theta_k}$ to define samples as simulation steps rather than trajectories, and performs subsampling on the simulation steps in the current batch. The use of simulation steps as samples allows estimates to be based on larger sample sizes, and \citet{schulman_2015} motivates the use of subsampling for its computational benefit. However, subsampling is also important because it results in samples $(s,a) \sim d_{\boldsymbol\theta_k}$ that are only weakly dependent, which is more consistent with the standard assumption of independent samples when constructing sample-based estimates. 

We ignore the constant $1/(1-\gamma)$ factor of the policy gradient when estimating the covariance matrix, as the impact of this constant can be absorbed into the trade-off parameter. This results in estimates $\hat{\mathbf{F}}$ and $\hat{\mathbf{\Sigma}}$ whose entries are of the same order of magnitude. Finally, we use independent standard Gaussian samples to construct the random matrix $\boldsymbol\Omega$ needed to compute random projections in UA-TRPO \citep{halko_2011}.

\end{document}